\documentclass[letterpaper, 11 pt]{article} 
\usepackage{amsmath,amsfonts,amssymb,color}
\usepackage{mathtools}
\usepackage{dsfont}
\usepackage[dvipsnames]{xcolor}
\definecolor{mygreen}{rgb}{0.0, 0.5, 0.0}
\definecolor{winered}{rgb}{0.8,0,0}
\definecolor{myblue}{rgb}{0,0,0.8}
\usepackage{tikz}
\usepackage{amsthm}
\usepackage{empheq}

\newtheorem{definition}{Definition}
\newtheorem{theorem}{Theorem}
\newtheorem{lemma}{Lemma}

\newtheorem{remark}{Remark}
\newtheorem{assumption}{Assumption}

\newcommand{\mc}{\mathcal}

\DeclarePairedDelimiterX{\norm}[1]{\lVert}{\rVert}{#1}

\usepackage{algorithm}
\usepackage{algpseudocode}
\usepackage{epsfig}
\usepackage{array}
\usepackage{multirow}
\usepackage{epstopdf}
\usepackage{tikz}
\usepackage{relsize}
\usetikzlibrary{shapes,arrows}
\usepackage[hidelinks]{hyperref}
\hypersetup{
    colorlinks=true,
    linkcolor={winered},
    citecolor={mygreen}
}
\usepackage{cite}
\usepackage[margin=1in]{geometry}
\title{A Simple Finite-Time Analysis of TD Learning with Linear Function Approximation}
\author{Aritra Mitra
\thanks{A. Mitra is with the Department of Electrical and Computer Engineering,  North Carolina State University. Email: {\tt amitra2@ncsu.edu}.}}
\date{}
\begin{document}
\maketitle
\thispagestyle{empty}
\pagestyle{empty}
\begin{abstract}
We study the finite-time convergence of TD learning with linear function approximation under Markovian sampling. Existing proofs for this setting either assume a projection step in the algorithm to simplify the analysis, or require a fairly intricate argument to ensure stability of the iterates. We ask: \textit{Is it possible to retain the simplicity of a projection-based analysis without actually performing a projection step in the algorithm?} Our main contribution is to show this is possible via a novel two-step argument. In the first step, we use induction to prove that under a standard choice of a constant step-size $\alpha$, the iterates generated by TD learning remain uniformly bounded in expectation. In the second step, we establish a recursion that mimics the steady-state dynamics of TD learning up to a bounded perturbation on the order of $O(\alpha^2)$ that captures the effect of Markovian sampling. Combining these pieces leads to an overall approach that considerably simplifies existing proofs. We conjecture that our inductive proof technique will find applications in the analyses of more complex stochastic approximation algorithms, and conclude by providing some examples of such applications.  
\end{abstract}
\section{Introduction}
We study sequential decision-making within the framework of a Markov Decision Process (MDP) with a finite state and action space. At each time-step, an agent/learner interacts with an environment by playing an action, observing a reward for this action, and transitioning to a new state in the MDP. The reward functions and probability transition kernels of the MDP that generate the agent's observations are \emph{unknown} to the agent. Via repeated interactions with the environment, the goal of the agent is to learn a policy (sequence of actions) that maximizes a long-term cumulative return. In this paper, we focus on the simpler problem of \emph{policy evaluation}: estimating the expected infinite-horizon cumulative discounted return - known as the value function - corresponding to a \emph{fixed} policy. To solve this policy evaluation problem, Sutton introduced a family of on-line incremental algorithms known as \emph{temporal-difference} (TD) methods in his 1988 paper~\cite{sutton1988learning}. While these algorithms are easy to describe and implement, their analyses turn out to be quite non-trivial: in~\cite{tsitsiklisroy}, Tsitsiklis and Van Roy note: ``\emph{Though temporal-difference learning is simple and elegant, a rigorous analysis of its behavior requires significant sophistication}". This is more so the case when in practice, due to large state spaces, a function approximator is used to approximate the value function. In this context, our goal is to provide a short and accessible \emph{finite-time} convergence proof of TD learning with linear function approximation. Before we explain our contribution in this regard, it is instructive to briefly summarize what is known for this setting. 
\newpage
\textbf{Related Work.} The first paper to provide an asymptotic convergence analysis for TD learning with linear function approximation was~\cite{tsitsiklisroy}. This was achieved by viewing TD methods as instances of stochastic approximation algorithms~\cite{borkar, borkarode}. While the results in~\cite{tsitsiklisroy} provided foundational insights, they came with no rates. In the years to come, several papers did manage to provide finite-time convergence rates for TD learning~\cite{korda, narayanan, lakshmi, dalal}; however, their analyses made the restrictive assumption that the data samples used for performing updates are generated in an i.i.d. manner over time from the stationary distribution of the underlying Markov chain (induced by the policy). In reality, however, these samples are all part of a single trajectory generated by the Markov chain, and, as such, exhibit \emph{temporal correlations}. It is precisely these temporal correlations that make the analysis of even the simplest TD method - known as \texttt{TD}(0) - quite non-trivial.

By making interesting connections to the dynamics of stochastic gradient descent (SGD), the authors in~\cite{bhandari_finite} were able to provide the first finite-time analysis of TD learning under Markovian sampling. However, their analysis hinges crucially on a projection step in the algorithm to ensure that the iterates generated by the projected TD method remain uniformly bounded. To summarize, all the papers above either only provide asymptotic rates, or assume i.i.d. sampling, or assume a projection step. To our knowledge, the first paper to provide finite-time mean-square error bounds for TD learning with linear function approximation under Markovian sampling \emph{without 
a projection step} was~\cite{srikant2019finite}. The approach in~\cite{srikant2019finite} is control-theoretic, where the authors draw on Lyapunov theory for analyzing the stability of linear dynamical systems. While the analysis in~\cite{srikant2019finite} is elegant, it requires a relatively more involved argument than the simpler projection-based analysis in~\cite{bhandari_finite}. This leads to the main question we investigate in this paper: \emph{Is it possible to retain the simplicity of a projection-based analysis without actually performing a projection step in the algorithm?} 

\textbf{Our Contribution.} We start by answering the above question in the affirmative for the \texttt{TD}(0) algorithm with linear function approximation. Our proof is simple, and relies on a novel inductive argument. In what follows, we provide the crux of the argument; the details are deferred to Section~\ref{sec:analysis}. Like in any standard stochastic optimization proof, we first use the update rule to write down a recursion for the mean-squared error. With little algebra, the right hand side of this recursion can be effectively decomposed into three terms: (i) an exponentially decaying term that captures the ``steady-state" dynamics of \texttt{TD}(0); (ii) a noise variance term that is typical of any noisy iterative algorithm; and (iii) a term that captures an additional error due to Markovian sampling. 

In the absence of the third term, the proof would be standard, akin to an SGD (stochastic gradient descent) analysis under i.i.d. noise. We view the third term as a disturbance; the rationale for this will become clear soon. Unlike an input-to-stability argument in control where a uniformly bounded disturbance excites a stable system, the disturbance in our case depends on the \emph{time-varying iterates}. At this stage, a projection step would greatly simplify the analysis since we could then argue boundedness of the iterates, and hence, of the disturbance. However, recall that we do not assume a projection step. So is there a simple way then to control the disturbance term? Yes, this is where our novel inductive idea kicks in. By assuming a suitable uniform bound in expectation - say $B$ - on the past iterates as part of our induction hypothesis, we bound the disturbance term. Plugging this bound back into our main recursion, we are able to then show that the same uniform bound $B$ applies to the new iterate. In short, we show in Theorem~\ref{thm:ind_main} that under a standard choice of a constant step-size $\alpha$, the iterates generated by \texttt{TD}(0) remain uniformly bounded in expectation. The proof of this result is the main contribution of our work. Armed with Theorem~\ref{thm:ind_main}, we go back to the main recursion with the knowledge that the disturbance term is an $O(\alpha^2)$ uniformly bounded perturbation, of the same order as the noise variance. The rest is trivial. 

\textbf{Motivation and Applications.} One might ask: \textit{Why care about this new proof technique?} Here are our reasons. First, each new analysis of TD learning - like ours - sheds new insights into the dynamics of a rather complex stochastic process. Second, as we discuss in Section~\ref{sec:applications}, the scope of our inductive proof technique extends well beyond \texttt{TD}(0) to a much broader class of general (potentially nonlinear) stochastic approximation schemes that include the \texttt{TD}($\lambda$) family (with linear function approximation) and variants of Q-learning as special cases. Finally, perhaps the most compelling reason is the following. Iterative optimization algorithms like SGD form the cornerstone of almost all large-scale machine learning applications for a reason: SGD is \emph{provably robust} to a variety of structured perturbations that invariably show up in these applications, e.g., delays, asynchrony, quantization and compression errors, and adversarial corruption~\cite{arjevani, stich2020comm}. However, it remains poorly understood whether even basic reinforcement learning (RL) algorithms like the TD methods are robust to similar perturbations. In principle, one can view the effect of these perturbations as a disturbance (potentially iterate-dependent) to the nominal dynamics, much like the effect of Markovian noise. Lumping all the disturbances together, a natural next step could then be to apply the inductive proof technique we outlined earlier. In a companion paper~\cite{delayedSA}, we show that this approach is essential to deriving tight rates for a broad class of stochastic approximation algorithms perturbed by time-varying delays and Markovian noise - a challenging setting that was previously unexplored. In Section~\ref{sec:applications}, we also explain that it is unclear whether the existing approach in~\cite{srikant2019finite} can handle such time-varying delays; in fact, this was precisely the motivation for developing the technique in this paper. Thus, we believe that the simplicity of our approach can help reason about the robustness of various complex stochastic approximation algorithms, beyond what we cover in this note. 

\section{Background on TD Learning}
\label{sec:model}
In this section, we set up notation and provide the necessary technical background. Given a positive integer $n$, we use $[n]$ to denote the set $\{1, 2, \ldots, n\}.$ Unless otherwise stated, we will use $\Vert \cdot \Vert$ to denote the standard Euclidean norm. We consider a Markov Decision Process (MDP) 
denoted by $\mc{M}=(\mc{S},\mc{A},\mc{P},\mc{R},\gamma)$,  where $\mc{S}$ is a finite state space of size $n$, $\mc{A}$ is a finite action space, $\mc{P}$ is a set of action-dependent Markov transition kernels, $\mc{R}$ is a reward function, and $\gamma \in (0,1)$ is the discount factor. A deterministic policy $\mu: \mc{S} \rightarrow \mc{A}$ is a mapping from the states to the actions. When a fixed policy $\mu$ interacts with the underlying MDP, it generates a Markov reward process (MRP) characterized by a transition matrix $P_{\mu}$, and a reward function $R_{\mu}$. At a given state $s$, upon playing the action $\mu(s)$, an agent receives an expected instantaneous reward denoted by $R_{\mu}(s)$, and its probability of transitioning from state $s$ to state $s'$ is given by $P_{\mu}(s,s')$. 
The discounted expected cumulative reward obtained by playing policy $\mu$ starting from initial state $s$ is given by:
\begin{equation}
    V_{\mu}(s) = \mathbb{E}\left[ \sum_{t=0}^{\infty}\gamma^t R_{\mu}(s_t) | s_0 = s \right],
\label{eqn:v_cum}
\end{equation}
where $s_t$ represents the state of the Markov chain (induced by $\mu$) at time $t$, when initiated from $s_0=s$. In essence, the value function $V_\mu$ measures the ``goodness" of the policy $\mu$, and the central goal in RL is to find an optimal policy that simultaneously maximizes $V_\mu(s), \forall s \in \mathcal{S}.$ For an MDP with finite state and action spaces, such a (deterministic) optimal policy is known to always exist~\cite{puterman}. We will primarily focus on the simpler task of \textit{policy evaluation}, where the goal is to evaluate the value function $V_{\mu}$ corresponding to a fixed policy $\mu$.\footnote{Later in Section~\ref{sec:applications}, we will comment on how our developments also aid the problem of finding the optimal policy.} It is well-known \cite{tsitsiklisroy} that $V_\mu$ is the fixed point of the policy-specific Bellman operator $\mathcal{T}_\mu:\mathbb{R}^{n} \rightarrow \mathbb{R}^{n}$, i.e., $\mathcal{T}_\mu V_\mu = V_\mu$, where for any $V \in \mathbb{R}^n$,
\begin{equation}
    (\mc{T}_\mu V) (s) = R_{\mu}(s)+\gamma \sum_{s'\in \mc{S}} P_{\mu}(s,s') V(s'), \hspace{1mm} \forall s \in \mc{S}.
\label{eqn:Bellman_op}
\end{equation}

When the underlying MDP is known, the above key property is sufficient to devise a simple dynamic programming approach that guarantees convergence to the value function $V_{\mu}$~\cite{puterman}. Our interest is however in the RL setting where the state transition matrices and the reward functions of the MDP are \emph{unknown}. In addition to this challenge, for contemporary RL applications, the size of the state space $\mc{S}$ can be extremely large. This renders the task of estimating $V_\mu$ \textit{exactly} (based on observations of rewards and state transitions) intractable. The common workaround is to consider a parametric approximation $\hat{V}_\theta$ of $V_\mu$ in the linear subspace spanned by a set $\{\phi_k\}_{k\in [K]}$ of $K \ll n$  basis vectors, where $\phi_k =[\phi_k(1), \ldots, \phi_k(n)]^{\top} \in \mathbb{R}^{n}$. Specifically, we have $\hat{V}_\theta(s) = \sum_{k=1}^{K} \theta(k)\phi_k(s),$ 
where $\theta = [\theta(1), \ldots, \theta(K)]^{\top} \in \mathbb{R}^{K}$ is a weight/parameter vector. Let $\Phi \in \mathbb{R}^{n \times K}$ be a matrix with $\phi_k$ as its $k$-th column; we then have $\hat{V}_\theta=\Phi \theta$. Let us also denote the $s$-th row of $\Phi$ by $\phi(s) \in \mathbb{R}^{K}$, and refer to it as the feature vector for state $s$. For each state $s \in \mathcal{S}$, we then have: $\hat{V}_\theta(s) = \langle \phi(s), \theta \rangle$. To proceed, we will make the standard assumption that the columns of $\Phi$ are linearly independent, and that the feature vectors are normalized, i.e., for each $s \in \mc{S}$, $\Vert \phi(s) \Vert^2 \leq 1$~\cite{bhandari_finite}. Given this premise, the problem of interest is to find the parameter $\theta^*$ corresponding to the best parametric approximation (in a suitable norm) of $V_{\mu}$. We now describe the classical TD(0)  algorithm~\cite{sutton1988learning} - due to Sutton - for achieving this goal. 

\textbf{The \texttt{TD}(0) Algorithm.} Starting from an initial parameter estimate $\theta_0$, the TD(0) algorithm operates as follows.  At each time-step $t=0, 1, \ldots$, an observation in the form of a data tuple $X_t=(s_t, s_{t+1}, r_t=R_{\mu}(s_t))$  is received. The tuple comprises the current state $s_t$, the next state $s_{t+1}$ reached by playing action $\mu(s_t)$, and the instantaneous reward $r_t$. Given this tuple $X_t$, the current parameter $\theta_t$ is updated by moving along the TD(0) update direction; for a fixed $\theta \in \mathbb{R}^K$, we define this direction $g_t(\theta)=g(\theta; X_t)$ as follows:
$$ g_t(\theta) \triangleq \left(r_t + \gamma \langle \phi(s_{t+1}), \theta \rangle -  \langle \phi(s_{t}), \theta \rangle\right) \phi(s_t), \forall \theta \in \mathbb{R}^K. $$ 
The TD(0) update rule can then be described as
\begin{equation}
    \theta_{t+1}=\theta_t + \alpha_t g_t(\theta_t),
\label{eqn:TD(0)update}
\end{equation}
where $\alpha_t \in (0,1)$ is the step-size/learning rate. \textit{Our main {goal} in this paper is to provide a short non-asymptotic proof of convergence for the above algorithm.} To do so, we will make the following standard assumption~\cite{tsitsiklisroy, bhandari_finite, srikant2019finite}.

\begin{assumption}
\label{ass:aperiodic}
    The Markov chain induced by the policy $\mu$ is aperiodic and irreducible.
\end{assumption}

For the value functions to be well-defined, we will also make the standard assumption that $\exists \bar{r} > 0$ such that $R_{\mu}(s) \leq \bar{r}, \forall s \in \mathcal{S}.$ Under these assumptions, Tsitsiklis and Van Roy showed that with a suitable step-size sequence $\{\alpha_t\}$, the iterates generated by Eq.~\eqref{eqn:TD(0)update} converge almost surely to the best linear approximator of $V_{\mu}$ in the span of $\{\phi_k\}_{k\in [K]}$~\cite{tsitsiklisroy}. To be more precise, we note that under Assumption~\ref{ass:aperiodic}, the Markov chain induced by the policy $\mu$ admits a unique stationary distribution $\pi$~\cite{levin2017markov}. Let $D$ be a diagonal matrix with $D(i,i)=\pi(i), \forall i \in [n].$ Moreover, let $\Pi_D(\cdot)$ denote the projection operator onto the subspace spanned by $\{\phi_k\}_{k\in [K]}$ with respect to the inner product $\langle \cdot, \cdot \rangle_D$. Then, the main result in~\cite{tsitsiklisroy} shows that $\theta_t \rightarrow \theta^*$ with probability $1$, where $\theta^*$ is the unique solution of the projected Bellman equation $\Pi_D \mc{T}_\mu (\Phi \theta^*) = \Phi \theta^*$. Notably, this result is \emph{asymptotic}: it does not provide a sense of the \emph{rate} at which $\theta_t$ approaches $\theta^*$ as a function of the discrete time-index $t$. An object that provides a lot of intuition about the rate is the ``steady-state" version of the \texttt{TD}(0) update direction, defined as follows:
\begin{equation}
\bar{g}(\theta) \triangleq \mathbb{E}_{s_t \sim  \pi,  s_{t+1} \sim P_{\mu}(\cdot| s_t) }\left[g(\theta; X_t)\right], \forall \theta\in\mathbb{R}^K.
\label{eqn:steady-stateTD}
\end{equation}

In~\cite{bhandari_finite}, it was shown that the \emph{deterministic} steady-state recursion $\theta_{t+1}= \theta_{t}+\alpha \bar{g}(\theta_t)$ converges \emph{linearly} to $\theta^*$ with a suitable constant step-size $\alpha$. To extend this result to the stochastic recursion in~\eqref{eqn:TD(0)update}, we will require the notion of a mixing time.

\begin{definition} \label{def:mix} 
Define
$\tau_{\epsilon} \triangleq \min\{t\geq1: \Vert \mathbb{E}\left[g(\theta; X_k)|X_0\right]-\bar{g}(\theta)\Vert \leq \epsilon\left(\Vert \theta \Vert +1 \right), \forall k \geq t, \forall \theta \in \mathbb{R}^K, \forall X_0\}.$ 
\end{definition}

A key implication of Assumption \ref{ass:aperiodic} is that the total variation distance between the conditional distribution $\mathbb{P}\left(s_t=\cdot|s_0=s\right)$ and the stationary distribution $\pi$ decays geometrically fast, regardless of the initial state $s\in\mc{S}$~\cite{levin2017markov}. This, in turn, immediately implies that $\tau_{\epsilon}$ in Definition \ref{def:mix} is $O\left(\log(1/\epsilon)\right)$ \cite{chenQ}. For our purpose, we will set the precision $\epsilon=\alpha$, and henceforth simply use $\tau$ as a shorthand for $\tau_{\alpha}$. By exploiting the geometric mixing property above, and by making elegant connections to smooth and strongly convex optimization using (stochastic) gradient descent, the authors in~\cite{bhandari_finite} were able to provide a finite-time convergence rate for \texttt{TD}(0). However, as explained in the Introduction, the analysis in~\cite{bhandari_finite} crucially relies on a projection step to control the effect of temporal correlations in the data tuples - a consequence of Markovian sampling. \emph{Can we continue to leverage the insights from optimization in~\cite{bhandari_finite}, while analyzing the \texttt{TD}(0) update rule in~\eqref{eqn:TD(0)update} without projection?} The next section answers this question in the affirmative. 

\section{Convergence Analysis}
\label{sec:analysis}
We start by compiling a few basic results that will aid our subsequent analysis. 
The first such result provides the connection to optimization: it shows that the steady-state \texttt{TD}(0) update direction $\bar{g}(\theta)$ acts like a ``pseudo-gradient", driving the \texttt{TD}(0) iterates towards the solution $\theta^*$ of the projected Bellman equation. A proof of this result was provided in~\cite{bhandari_finite}. We provide an alternate proof in the Appendix to keep the paper self-contained.

\begin{lemma}
\label{lemma:convex}
The following holds $\forall \theta \in \mathbb{R}^K$:
$$ \langle \theta^* - \theta, \bar{g}(\theta) \rangle \geq \omega (1-\gamma) \Vert \theta^* -\theta \Vert^2,$$ 
 where $\omega$ is the smallest eigenvalue of the matrix $\Sigma = \Phi^\top D \Phi$.
\end{lemma}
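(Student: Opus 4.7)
The plan is to exploit the affine structure of $\bar{g}(\theta)$ and reduce the claim to a well-known non-expansion property of $P_\mu$ with respect to the weighted norm $\|\cdot\|_D$.

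First, I would expand the definition in \eqref{eqn:steady-stateTD} and write
\[
\bar{g}(\theta) = b + A\theta, \quad b = \Phi^\top D R_\mu, \quad A = \Phi^\top D (\gamma P_\mu - I)\Phi,
\]
using $\mathbb{E}_{s\sim\pi}[\phi(s)\phi(s)^\top] = \Phi^\top D \Phi$ and $\mathbb{E}_{s\sim\pi,\,s'\sim P_\mu(\cdot|s)}[\phi(s)\phi(s')^\top] = \Phi^\top D P_\mu \Phi$. Since $\theta^*$ satisfies the projected Bellman equation $\Pi_D \mathcal{T}_\mu (\Phi\theta^*) = \Phi\theta^*$, a short calculation shows that this is equivalent to $\bar{g}(\theta^*) = 0$, i.e., $b = -A\theta^*$. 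Hence $\bar{g}(\theta) = A(\theta - \theta^*)$, so letting $u = \theta^* - \theta$,
\[
\langle \theta^* - \theta, \bar{g}(\theta)\rangle = -u^\top A u = u^\top \Phi^\top D (I - \gamma P_\mu) \Phi u.
\]

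Next, setting $v = \Phi u \in \mathbb{R}^n$ and using the weighted inner product $\langle x, y\rangle_D = x^\top D y$, the expression becomes $\|v\|_D^2 - \gamma\,\langle v, P_\mu v\rangle_D$. The core analytic step is the non-expansion $\|P_\mu v\|_D \leq \|v\|_D$; I would prove this by Jensen's inequality applied to $(P_\mu v)(s)^2 = \bigl(\sum_{s'} P_\mu(s,s')v(s')\bigr)^2 \leq \sum_{s'} P_\mu(s,s') v(s')^2$, followed by summing against $\pi(s)$ and invoking stationarity $\pi^\top P_\mu = \pi^\top$ to collapse $\sum_s \pi(s) P_\mu(s,s') = \pi(s')$. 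Combined with Cauchy--Schwarz in $\langle\cdot,\cdot\rangle_D$, this gives $\langle v, P_\mu v\rangle_D \leq \|v\|_D^2$, hence
\[
u^\top \Phi^\top D (I-\gamma P_\mu)\Phi u \geq (1-\gamma)\|v\|_D^2 = (1-\gamma)\, u^\top \Sigma u.
\]

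Finally, since $\Sigma = \Phi^\top D \Phi$ is symmetric positive definite (by linear independence of the columns of $\Phi$ and positivity of $\pi$ under Assumption~\ref{ass:aperiodic}), $u^\top \Sigma u \geq \omega \|u\|^2$ where $\omega = \lambda_{\min}(\Sigma) > 0$. Chaining the two inequalities yields the claim. The only nontrivial step is the $D$-non-expansion of $P_\mu$, but it is a standard Jensen-plus-stationarity argument, so I do not anticipate a serious obstacle; the rest is linear algebra driven by the affine form of $\bar{g}$ and the identity $\bar{g}(\theta^*)=0$.
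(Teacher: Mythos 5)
Your proposal is correct and follows essentially the same route as the paper's proof: exploit the affine form $\bar{g}(\theta)=\bar{A}(\theta-\theta^*)$ via $\bar{g}(\theta^*)=0$, split the quadratic form into the $\Vert \cdot \Vert_D^2$ term and the $\gamma P_\mu$ cross term, bound the latter by Cauchy--Schwarz in $\langle\cdot,\cdot\rangle_D$ together with the $D$-non-expansiveness of $P_\mu$, and finish with $\lambda_{\min}(\Sigma)=\omega$. The only (harmless) difference is that you prove the non-expansion $\Vert P_\mu v\Vert_D\leq\Vert v\Vert_D$ from scratch via Jensen and stationarity, whereas the paper imports it as a lemma from Tsitsiklis and Van Roy.
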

\noindent Under the assumptions on the feature matrix $\Phi$ in Section~\ref{sec:model}, and Assumption~\ref{ass:aperiodic}, it is easy to see that $\Sigma$ is positive definite with $\omega \in (0,1).$ We will make use of the fact that the \texttt{TD}(0) update direction, along with its steady-state version, are both $2$-Lipschitz, i.e., $\forall t \in \mathbb{N}$, and $\forall \theta_1, \theta_2 \in \mathbb{R}^K$, 
\begin{equation}
\max\{\Vert \bar{g}(\theta_1)-\bar{g}(\theta_2) \Vert, \Vert {g}_t(\theta_1)- {g}_t(\theta_2) \Vert\} \leq 2 \Vert \theta_1 - \theta_2 \Vert. 
\label{eqn:Lipschitz}
\end{equation}

Next, at several points in our analysis, we will invoke the following bound on the norm of the \texttt{TD}(0) update direction: 
\begin{equation}
    \Vert g_t(\theta) \Vert \leq 2 \Vert \theta \Vert + 2 \bar{r} \leq 2\Vert \theta \Vert + 2 \sigma, \forall t\in \mathbb{N}, \forall \theta \in \mathbb{R}^K, 
    \label{eqn:nrm}
\end{equation}
where $\sigma=\max\{1,\bar{r},\Vert \theta^* \Vert\}.$ This immediately yields:
\begin{equation}
 \Vert g_t(\theta) \Vert \leq 2 \Vert \theta - \theta^* \Vert + 4 \sigma, \forall t\in \mathbb{N}, \forall \theta \in \mathbb{R}^K.
 \label{eqn:nrmbnd}
\end{equation}

The bounds in Eq.~\eqref{eqn:Lipschitz} and Eq.~\eqref{eqn:nrm} follow straightforwardly from the fact that the TD(0) update direction $g_t(\theta)$ is affine in the parameter $\theta$. Hence, we omit the proof here; an interested reader can take a look at~\cite{mitraTDEF}. Since $\bar{g}(\theta^*)=0$ (see~\cite{tsitsiklisroy}), the Lipschitz property in \eqref{eqn:Lipschitz} implies 
\begin{equation}
\Vert \bar{g}(\theta) \Vert \leq 2 \Vert \theta - \theta^*\Vert, \forall \theta \in \mathbb{R}^K.
\label{eqn:nrmbnd2}
\end{equation}

Let us now provide some intuition behind our analysis. Throughout, to present our arguments in a clean way, we will use the big-$O$ notation to suppress universal constants. We begin by defining a couple of objects for all $t\geq 0$: 
\begin{equation}
d_t \triangleq \mathbb{E}[ \Vert \theta_t - \theta^*\Vert^2 ], \hspace{2mm} e_t \triangleq \mathbb{E}[\langle \theta_t -\theta^*, g_t(\theta_t) - \bar{g}(\theta_t)\rangle]. 
\label{eqn:disturbance}
\end{equation}
Now observe from the update rule~\eqref{eqn:TD(0)update} that
\begin{equation}
\begin{aligned}
    \Vert \theta_{t+1} - \theta^*\Vert^2 &= \Vert \theta_t - \theta^* \Vert^2 +2 \alpha \langle \theta_t - \theta^*, g_t(\theta_t) \rangle + \alpha^2 \Vert g_t(\theta_t) \Vert^2\\
    &= \Vert \theta_t - \theta^* \Vert^2 +2 \alpha \langle \theta_t - \theta^*, \bar{g}(\theta_t) \rangle + \alpha^2 \Vert g_t(\theta_t) \Vert^2\\
    &\hspace{2mm} + 2\alpha \langle \theta_t -\theta^*, g_t(\theta_t) - \bar{g}(\theta_t)\rangle\\
    & \leq \left(1-2 \alpha \omega (1-\gamma) + 8 \alpha^2 \right) \Vert \theta_t - \theta^* \Vert^2 + 32 \alpha^2 \sigma^2\\
    &\hspace{2mm} +2\alpha \langle \theta_t -\theta^*, g_t(\theta_t) - \bar{g}(\theta_t)\rangle,
\end{aligned}
\nonumber
\end{equation}
where in the last step, we used Lemma~\ref{lemma:convex} and Eq.~\eqref{eqn:nrmbnd}. Taking expectations on both sides of the above display then yields:
\begin{equation}
\boxed{
d_{t+1} \leq \underbrace{\left(1-2 \alpha \omega (1-\gamma) + 8 \alpha^2 \right) d_t}_{T_1} + \underbrace{32 \alpha^2 \sigma^2}_{T_2}+\underbrace{2\alpha e_t}_{T_3}.
\label{eqn:main_recursion}}
\end{equation}
We are left to analyze the key recursion above. Notice that the right hand side of the above recursion features three terms: (i) the term $T_1$ captures the steady-state behavior of \texttt{TD}(0); (ii) the term $T_2$ is a noise variance term that typically shows up in the analysis of any noisy iterative algorithm (e.g., SGD); and (iii) the term $T_3$ captures the effect of Markovian sampling. In the absence of the third term, one can immediately see from Eq.~\eqref{eqn:main_recursion} that with a suitably chosen step-size $\alpha$, the iterates would converge linearly (in the mean-square sense) to a ball of radius $O(\alpha \sigma^2)$ centered around the optimal parameter $\theta^*$. Moreover, the proof would be near-identical to that of SGD. Intuitively, if we could thus show that $T_3 = O(\alpha^2 \sigma^2)$, i.e., the same order as $T_2$, we would be done. However, observe from Eq.~\eqref{eqn:disturbance} that $e_t$ depends on the current iterate $\theta_t$; as such, we cannot directly claim a uniform upper bound on $T_3$ in the absence of a projection step. For a moment, suppose we wish to overcome this difficulty via an inductive argument where we assume a uniform upper bound - say $B$ - on all the iterates up to time $t$. Our goal would then be to appeal to Eq.~\eqref{eqn:main_recursion} to show that the same bound $B$ applies to the iterate at time $t+1$. Unfortunately, a naive inductive argument such as the one above would only tell us that $T_3$ is an $O(\alpha)$ perturbation. The issue with this bound is that it is too loose: we wanted an $O(\alpha^2)$ additive perturbation, but ended up with an $O(\alpha)$ perturbation. The reason why we fell short of our desired outcome is because we did not exploit the additional structure in $e_t$: the geometric mixing property in Definition~\ref{def:mix}
tells us that eventually, $e_t$ should be ``small". Thus, we need a finer inductive argument that leverages this fact. 

Our first step in providing such an argument is a result that simply states that for a step-size $\alpha$ that scales inversely with the mixing time $\tau$, the iterate sequence $\{\theta_t\}$ will remain uniformly bounded for the first $\tau$ time-steps. This result will serve as the base case of our subsequent induction argument. 

\begin{lemma} \label{lemma:base}
Suppose $\alpha \leq 1/(8\tau).$ Define $B \triangleq 10 \max\{\Vert \theta_0 - \theta^* \Vert^2, \sigma^2\}.$ Then, we have:
\begin{equation}
\Vert \theta_k - \theta^* \Vert^2 \leq B, \forall k \in [\tau]. 
\end{equation}
\end{lemma}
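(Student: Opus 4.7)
My plan is to simply unroll the update rule \eqref{eqn:TD(0)update} and use the linear growth bound \eqref{eqn:nrmbnd} on $g_t(\theta_t)$. Because the step-size $\alpha$ is chosen so small relative to the mixing time that $2\alpha\tau \leq 1/4$, nothing can blow up over the first $\tau$ steps. Observe that the conclusion is a deterministic (pathwise) bound, so no expectations or probabilistic tools will be needed — this is a purely algebraic exercise.

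First, I would start from \eqref{eqn:TD(0)update} and apply the triangle inequality together with \eqref{eqn:nrmbnd} to get the one-step affine bound
$$ \Vert \theta_{t+1} - \theta^* \Vert \,\leq\, \Vert \theta_t - \theta^* \Vert + \alpha \Vert g_t(\theta_t) \Vert \,\leq\, (1 + 2\alpha)\Vert \theta_t - \theta^* \Vert + 4\alpha\sigma. $$
Iterating this recursion for $k \leq \tau$ steps yields
$$ \Vert \theta_k - \theta^* \Vert \,\leq\, (1+2\alpha)^k \Vert \theta_0 - \theta^* \Vert + 4\alpha\sigma \sum_{j=0}^{k-1}(1+2\alpha)^j. $$

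Next, I would use the hypothesis $\alpha \leq 1/(8\tau)$, which gives $2\alpha k \leq 2\alpha\tau \leq 1/4$. Hence $(1+2\alpha)^k \leq e^{2\alpha\tau} \leq e^{1/4} \leq 2$, and the geometric sum is at most $k(1+2\alpha)^{k-1} \leq 2\tau$. Substituting produces
$$ \Vert \theta_k - \theta^* \Vert \,\leq\, 2\Vert \theta_0 - \theta^* \Vert + 8\alpha\tau\sigma \,\leq\, 2\Vert \theta_0 - \theta^* \Vert + \sigma. $$
Squaring and using $(a+b)^2 \leq 2a^2 + 2b^2$ then yields $\Vert \theta_k - \theta^* \Vert^2 \leq 8\Vert \theta_0 - \theta^*\Vert^2 + 2\sigma^2 \leq 10\max\{\Vert \theta_0 - \theta^*\Vert^2, \sigma^2\} = B$, completing the proof.

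There is no real obstacle here: the argument is clean precisely because $\tau$ steps are too few for the $(1+2\alpha)$ amplification to accumulate meaningfully, so neither the contraction from Lemma~\ref{lemma:convex} nor any mixing estimate is needed. The only mildly delicate point is bookkeeping of constants — the choices $\alpha \leq 1/(8\tau)$ and $B = 10\max\{\cdots\}$ are matched so that the final numerical bound absorbs cleanly into $B$, leaving enough slack for the inductive step that will follow in Theorem~\ref{thm:ind_main}.
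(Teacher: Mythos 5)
Your proposal is correct and follows essentially the same route as the paper's own proof: the one-step affine bound via \eqref{eqn:nrmbnd}, unrolling over $k \le \tau$ steps, the estimate $(1+2\alpha)^\tau \le e^{1/4} < 2$, and absorbing $8\alpha\tau\sigma \le \sigma$ before squaring. Your explicit use of $(a+b)^2 \le 2a^2 + 2b^2$ to land exactly on the constant $10$ is a detail the paper leaves implicit, but the argument is identical in substance.
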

\begin{proof}
From the update rule~\eqref{eqn:TD(0)update}, we have
\begin{equation}
\begin{aligned}
\Vert \theta_{t+1} - \theta^*\Vert &\leq \Vert \theta_t - \theta^* \Vert + \alpha \Vert g_t(\theta_t) \Vert \\
&\leq (1+2\alpha) \Vert \theta_t - \theta^* \Vert + 4 \alpha \sigma,
\end{aligned}
\end{equation}
where we used~\eqref{eqn:nrmbnd} in the second step. Iterating the above inequality yields the following $\forall k \in [\tau]$:
\begin{equation}
\begin{aligned}
\Vert \theta_{k} - \theta^*\Vert & \leq (1+2\alpha)^k \Vert \theta_0 - \theta^* \Vert + 4 \alpha \sigma \sum_{j=0}^{k-1} (1+2\alpha)^j\\
&\leq (1+2\alpha)^\tau \Vert \theta_0 - \theta^* \Vert + 4 \alpha \tau (1+2\alpha)^\tau\sigma\\
&\leq 2\Vert \theta_0 - \theta^* \Vert + 8 \alpha \tau \sigma\\
&\leq 2\Vert \theta_0 - \theta^* \Vert + \sigma,
\end{aligned}
\nonumber
\end{equation}
where in the third step, we used $(1+x) \leq \exp(x), \forall x \in \mathbb{R}$ to deduce that $(1+2\alpha)^\tau \leq \exp(0.25) < 2$, for $\alpha \leq 1/(8\tau).$ Squaring both sides of the final inequality above leads to the desired claim. 
\end{proof}

Our next goal is to show that a bound akin to that in the above lemma applies to time-steps greater than $\tau$ as well. To that end, we need the following intermediate result.

\begin{lemma} \label{lemma:driftbnd}
Consider any $t \geq \tau.$ Suppose $d_k \leq B, \forall k \in [t].$ Then, the following is true:
$$ \mathbb{E}[ \Vert \theta_t - \theta_{t-\tau} \Vert^2 ] \leq O(\alpha^2 \tau^2 B).$$
\end{lemma}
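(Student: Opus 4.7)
The plan is to unroll the \texttt{TD}(0) update rule over the $\tau$ steps from time $t-\tau$ to $t$, express the iterate gap as a telescoping sum, and then apply Cauchy--Schwarz together with the linear growth bound on the update direction from Eq.~\eqref{eqn:nrmbnd}. This is a standard ``drift lemma'' argument, and since $\tau$ is only a logarithmic factor in $1/\alpha$, we can afford to be loose.

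First, I would write
\[
\theta_t - \theta_{t-\tau} \;=\; \alpha \sum_{j=t-\tau}^{t-1} g_j(\theta_j),
\]
which is immediate from telescoping~\eqref{eqn:TD(0)update}. Taking norms, squaring, and invoking the elementary inequality $\bigl(\sum_{j=1}^{\tau} a_j\bigr)^2 \leq \tau \sum_{j=1}^{\tau} a_j^2$ would give
\[
\Vert \theta_t - \theta_{t-\tau}\Vert^2 \;\leq\; \alpha^2 \tau \sum_{j=t-\tau}^{t-1} \Vert g_j(\theta_j)\Vert^2.
\]
Next I would plug in the bound~\eqref{eqn:nrmbnd}, $\Vert g_j(\theta_j)\Vert \leq 2\Vert \theta_j - \theta^*\Vert + 4\sigma$, and use $(a+b)^2 \leq 2(a^2+b^2)$ to conclude $\Vert g_j(\theta_j)\Vert^2 \leq 8\Vert \theta_j - \theta^*\Vert^2 + 32\sigma^2$.

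Taking expectations and invoking the hypothesis $d_k = \mathbb{E}[\Vert \theta_k - \theta^*\Vert^2] \leq B$ for all $k \in [t]$, I would obtain
\[
\mathbb{E}[\Vert \theta_t - \theta_{t-\tau}\Vert^2] \;\leq\; \alpha^2 \tau \sum_{j=t-\tau}^{t-1}\bigl(8 d_j + 32 \sigma^2\bigr) \;\leq\; \alpha^2 \tau^2 \bigl(8B + 32\sigma^2\bigr).
\]
The last step is to absorb $\sigma^2$ into $B$ using the definition $B = 10 \max\{\Vert \theta_0 - \theta^* \Vert^2, \sigma^2\}$ from Lemma~\ref{lemma:base}, which gives $\sigma^2 \leq B/10$ and therefore $8B + 32\sigma^2 \leq 12B$. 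This yields $\mathbb{E}[\Vert \theta_t - \theta_{t-\tau}\Vert^2] = O(\alpha^2 \tau^2 B)$, as required.

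There is really no hard step here; the only thing to be careful about is that the uniform bound $d_k \leq B$ is assumed to hold over the entire window $[t-\tau, t]$, so that the per-step contribution $\mathbb{E}[\Vert g_j(\theta_j)\Vert^2]$ can be controlled uniformly by a constant multiple of $B$ for every $j$ in the sum. This is precisely what makes the lemma a useful building block for the inductive step that extends the boundedness guarantee of Lemma~\ref{lemma:base} beyond the mixing time.
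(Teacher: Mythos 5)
Your proposal is correct and follows essentially the same route as the paper's proof: telescope the update over the window $[t-\tau,t]$, apply the Cauchy--Schwarz-type inequality $(\sum_{j} a_j)^2 \leq \tau \sum_j a_j^2$, bound each $\Vert g_j(\theta_j)\Vert$ via Eq.~\eqref{eqn:nrmbnd}, and absorb $\sigma^2$ into $B$ using the induction hypothesis. The only cosmetic difference is that the paper applies the triangle inequality to $\Vert\theta_t-\theta_{t-\tau}\Vert$ before squaring, whereas you square the vector sum directly; the resulting bounds are identical.
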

\begin{proof} Observe:
\begin{equation}
\begin{aligned}
\Vert \theta_t - \theta_{t-\tau} \Vert &\leq \sum_{k=t-\tau}^{t-1} \Vert \theta_{k+1}-\theta_k \Vert \\
& \leq \alpha \sum_{k=t-\tau}^{t-1} \Vert g_k(\theta_k) \Vert\\
& \leq O(\alpha) \sum_{k=t-\tau}^{t-1} \left( \Vert \theta_k-\theta^* \Vert + \sigma \right),
\end{aligned}
\end{equation}
where in the last step, we used~\eqref{eqn:nrmbnd}. Squaring both sides of the above inequality, and taking expectations, we obtain
\begin{equation}
\begin{aligned}
\mathbb{E}[ \Vert \theta_t - \theta_{t-\tau} \Vert^2 ] &\leq O(\alpha^2 \tau) \sum_{k=t-\tau}^{t-1} (d_k + \sigma^2)\\
&\leq O(\alpha^2 \tau^2 B),
\end{aligned}
\end{equation}
where we used $d_k \leq B, \, \forall k \in [t]$, and $\sigma^2 \leq B$. 
\end{proof}

The final piece we need to complete our main inductive argument is the following lemma. 

\begin{lemma} \label{lemma:mixing}
Consider any $t \geq \tau.$ Suppose $d_k \leq B, \forall k \in [t].$ Then, the following is true:
$ e_t \leq O(\alpha \tau B).$
\end{lemma}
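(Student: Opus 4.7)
The plan is to exploit the geometric mixing property in Definition~\ref{def:mix}: while $\theta_t$ is highly correlated with the sample $X_t$, the delayed iterate $\theta_{t-\tau}$ is almost independent of $X_t$. Accordingly, I would introduce $\theta_{t-\tau}$ as a proxy for $\theta_t$ by inserting $\pm \theta_{t-\tau}$ into the first slot and $\pm g_t(\theta_{t-\tau}), \pm \bar{g}(\theta_{t-\tau})$ into the second slot of the inner product defining $e_t$, producing the decomposition
$$
\langle \theta_t - \theta^*, g_t(\theta_t) - \bar{g}(\theta_t) \rangle = T_a + T_b + T_c + T_d,
$$
where $T_a = \langle \theta_t - \theta_{t-\tau}, g_t(\theta_t) - \bar{g}(\theta_t) \rangle$ is a pure drift term; $T_b = \langle \theta_{t-\tau} - \theta^*, g_t(\theta_t) - g_t(\theta_{t-\tau}) \rangle$ and $T_c = \langle \theta_{t-\tau} - \theta^*, \bar{g}(\theta_{t-\tau}) - \bar{g}(\theta_t) \rangle$ invoke Lipschitz continuity; and $T_d = \langle \theta_{t-\tau} - \theta^*, g_t(\theta_{t-\tau}) - \bar{g}(\theta_{t-\tau}) \rangle$ is the critical mixing term.

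For $T_a$, $T_b$, and $T_c$, I would apply Cauchy-Schwarz, bound $\Vert g_t(\theta_t) - \bar{g}(\theta_t) \Vert$ via~\eqref{eqn:nrmbnd} and~\eqref{eqn:nrmbnd2}, bound $\Vert g_t(\theta_t) - g_t(\theta_{t-\tau}) \Vert$ and $\Vert \bar{g}(\theta_{t-\tau}) - \bar{g}(\theta_t) \Vert$ via the $2$-Lipschitz property~\eqref{eqn:Lipschitz}, take expectations, and apply Cauchy-Schwarz a second time. Each of these terms then reduces, up to universal constants, to a product of $\sqrt{\mathbb{E}[\Vert \theta_t - \theta_{t-\tau} \Vert^2]}$ with $\sqrt{d_k}$ or $\sigma$ for some $k \in \{t-\tau, t\}$. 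Plugging in the induction hypothesis $d_k \leq B$, the relation $\sigma^2 \leq B$ implicit in the definition of $B$, and the drift bound from Lemma~\ref{lemma:driftbnd}, each of $T_a$, $T_b$, $T_c$ contributes at most $O(\alpha \tau B)$ to $\mathbb{E}[e_t]$.

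For the mixing term $T_d$, let $\mathcal{F}_{t-\tau}$ denote the $\sigma$-algebra generated by $X_0, \ldots, X_{t-\tau}$, so that $\theta_{t-\tau}$ is $\mathcal{F}_{t-\tau}$-measurable. By the Markov property and time-homogeneity of $\{X_k\}$, the conditional expectation $\mathbb{E}[g(\theta; X_t) \mid \mathcal{F}_{t-\tau}]$ depends on $\mathcal{F}_{t-\tau}$ only through $X_{t-\tau}$, so Definition~\ref{def:mix} applied with $\epsilon = \alpha$ yields
$$
\bigl\Vert \mathbb{E}\bigl[g_t(\theta_{t-\tau}) \mid \mathcal{F}_{t-\tau}\bigr] - \bar{g}(\theta_{t-\tau}) \bigr\Vert \leq \alpha \bigl(\Vert \theta_{t-\tau} \Vert + 1\bigr),
$$
where we use that $\theta_{t-\tau}$ may be treated as the deterministic argument $\theta$ in the definition after conditioning. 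Applying the tower property, Cauchy-Schwarz, and the bound $\Vert \theta_{t-\tau} \Vert + 1 \leq \Vert \theta_{t-\tau} - \theta^* \Vert + 2\sigma$, then invoking the induction hypothesis and $\sigma \leq \sqrt{B}$, I would conclude $\mathbb{E}[T_d] \leq O(\alpha B) \leq O(\alpha \tau B)$.

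The main obstacle is conceptual rather than computational: one must see that substituting $\theta_{t-\tau}$ for $\theta_t$ can be carried out inside the inner product defining $e_t$, and that the three resulting Lipschitz/drift cross-terms cost only an additional factor of $\tau$ relative to the noise variance scale. Once the decomposition above is in hand, the bookkeeping is routine, and summing the four contributions yields $e_t \leq O(\alpha \tau B)$ as claimed.
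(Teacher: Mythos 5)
Your proposal is correct and follows essentially the same route as the paper: the four-term decomposition $T_a,T_b,T_c,T_d$ is exactly the paper's $T_1,T_3,T_4,T_2$ (relabeled), the drift and Lipschitz terms are controlled via Lemma~\ref{lemma:driftbnd} and the induction hypothesis, and the mixing term is handled by conditioning on the past at time $t-\tau$ so that $\theta_{t-\tau}$ can be treated as fixed in Definition~\ref{def:mix}. The only cosmetic difference is that you balance the cross-terms with Cauchy--Schwarz in expectation where the paper uses Young's inequality with weight $\alpha\tau$; both yield the same $O(\alpha\tau B)$ bound.
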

\begin{proof}
Let us start with the following decomposition: $\langle \theta_t -\theta^*, g_t(\theta_t) - \bar{g}(\theta_t)\rangle=T_1+T_2+T_3+T_4$, where
\begin{equation}
\begin{aligned}
T_1&=\langle \theta_t-\theta_{t-\tau}, g_t(\theta_t)-\bar{g}(\theta_t)\rangle,\\
T_2&=\langle \theta_{t-\tau}-\theta^*, g_t(\theta_{t-\tau})-\bar{g}(\theta_{t-\tau})\rangle,\\
T_3&=\langle \theta_{t-\tau}-\theta^*, g_t(\theta_t)- g_t(\theta_{t-\tau})\rangle, \hspace{2mm} \textrm{and}\\
T_4&=\langle \theta_{t-\tau}-\theta^*,\bar{g}(\theta_{t-\tau})-\bar{g}(\theta_t)\rangle.\\
\end{aligned}
\nonumber
\end{equation}
We will now argue that $\mathbb{E}[T_i] \leq O(\alpha \tau B), \forall i \in \{1,2,3,4\}.$ For $T_1$, observe that:
\begin{equation}
\begin{aligned}
T_1 &\leq \Vert \theta_t - \theta_{t-\tau} \Vert \Vert g_t(\theta_t) - \bar{g}(\theta_t) \Vert \\
& \leq \frac{1}{2\alpha \tau} \Vert \theta_t - \theta_{t-\tau} \Vert^2 + \frac{\alpha \tau}{2} \Vert g_t(\theta_t) - \bar{g}(\theta_t) \Vert^2 \\
&\leq \frac{1}{2\alpha \tau} \Vert \theta_t - \theta_{t-\tau} \Vert^2 + \alpha \tau \left( \Vert g_t(\theta_t) \Vert^2 + \Vert \bar{g}(\theta_t) \Vert^2\right)\\
&\overset{\eqref{eqn:nrmbnd}, \eqref{eqn:nrmbnd2}}\leq \frac{1}{2\alpha \tau} \Vert \theta_t - \theta_{t-\tau} \Vert^2 + O(\alpha \tau) \left( \Vert \theta_t-\theta^* \Vert^2 + \sigma^2 \right). 
\end{aligned}
\nonumber
\end{equation}
Now taking expectations on both sides of the above inequality, invoking Lemma~\ref{lemma:driftbnd}, and using $d_t \leq B$, we conclude that $\mathbb{E}[T_1] \leq O(\alpha \tau B).$ 

Next, for $T_3$, we have 
\begin{equation}
\begin{aligned}
T_3 &\leq \Vert \theta_{t-\tau} - \theta^*\Vert \Vert g_t(\theta_t) - {g}_t(\theta_{t-\tau}) \Vert\\
&\overset{\eqref{eqn:Lipschitz}}\leq 2 \Vert \theta_{t-\tau} - \theta^*\Vert \Vert \theta_t - \theta_{t-\tau} \Vert\\
&\leq \frac{1}{\alpha \tau} \Vert \theta_t -\theta_{t-\tau} \Vert^2 + \alpha \tau \Vert \theta_{t-\tau}-\theta^* \Vert^2. 
\end{aligned}
\nonumber
\end{equation}
Taking expectations on both sides of the above inequality and using Lemma~\ref{lemma:driftbnd} yields:
$$
\mathbb{E}[T_3] \leq O(\alpha \tau B) + \alpha \tau d_{t-\tau} \leq O(\alpha \tau B),
$$
where in the last step, we used $d_k \leq B, \forall k \in [t].$ The fact that $\mathbb{E}[T_4] \leq O(\alpha \tau B)$ follows exactly the same analysis as above. We are left to bound $T_2$; \emph{this is the only place in the entire proof where we will exploit the geometric mixing property of the underlying Markov chain.} We proceed as follows.
\begin{equation}
\begin{aligned}
    \mathbb{E}\left[T_{2}\right] &= \mathbb{E}\left[\langle \theta_{t-\tau} -\theta^*, g_t(\theta_{t-\tau})-\bar{g}(\theta_{t-\tau})\rangle\right]\\
    &=\mathbb{E}\left[\mathbb{E}\left[\langle \theta_{t-\tau} -\theta^*, g_t(\theta_{t-\tau})-\bar{g}(\theta_{t-\tau})\rangle | \theta_{t-\tau}, X_{t-\tau}\right]\right]\\
    &=\mathbb{E}\left[\langle \theta_{t-\tau} -\theta^*, \mathbb{E}\left[ g_t(\theta_{t-\tau})-\bar{g}(\theta_{t-\tau})| \theta_{t-\tau}, X_{t-\tau}\right]\rangle\right]\\
    &\leq \mathbb{E}\left[\Vert \theta_{t-\tau} -\theta^*\Vert \Vert \mathbb{E}\left[ g_t(\theta_{t-\tau})-\bar{g}(\theta_{t-\tau})| \theta_{t-\tau}, X_{t-\tau}\right]\Vert\right]\\
    &\overset{(a)}\leq \alpha \mathbb{E}\left[\Vert \theta_{t-\tau} -\theta^*\Vert  \left(1+\Vert \theta_{t-\tau}\Vert\right)\right]\\
    &\leq \alpha \mathbb{E}\left[\Vert \theta_{t-\tau} -\theta^*\Vert \left(1+\Vert \theta^* \Vert + \Vert \theta_{t-\tau}-\theta^*\Vert\right)\right]\\
    &\overset{(b)}\leq \alpha \mathbb{E}\left[\Vert \theta_{t-\tau} -\theta^*\Vert \left(2\sigma+\Vert \theta_{t-\tau} -\theta^* \Vert\right)\right]\\
    & \leq O(\alpha) \mathbb{E}\left[\Vert \theta_{t-\tau} -\theta^*\Vert^2 + \sigma^2\right]\\
    &\overset{(c)} \leq O(\alpha) \left(d_{t-\tau}+\sigma^2\right)=O(\alpha B),\\
\end{aligned}
\nonumber
\end{equation}
where (a) follows from the definition of the mixing time $\tau$ in Definition~\ref{def:mix}; (b) follows from recalling that $\sigma=\max\{1,\bar{r}, \Vert\theta^* \Vert\}$; and (c) follows from us again using  $d_k \leq B, \forall k \in [t]$. 
\end{proof}

We are now ready to state and prove the key technical result that guarantees uniform boundedness of the iterates under a suitable choice of the step-size $\alpha$. 

\begin{theorem} (\textbf{Boundedness of Iterates}) \label{thm:ind_main} There exists a universal constant $C \geq 8$ such that for 
\begin{equation}\alpha \leq \frac{\omega (1-\gamma)}{C \tau}, \label{eqn:step-size} \end{equation}
the following is true: $d_t \leq B, \forall t \geq 0.$
\end{theorem}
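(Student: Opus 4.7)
The plan is to proceed by strong induction on $t$, where the induction hypothesis at step $t$ is the statement $d_k \leq B$ for all $k \in \{0,1,\ldots,t\}$. The base case, covering $t \in \{0,1,\ldots,\tau\}$, is already taken care of by Lemma~\ref{lemma:base}: since $\Vert \theta_k - \theta^*\Vert^2 \leq B$ deterministically in that range, taking expectations gives $d_k \leq B$. Note that the assumption $\alpha \leq 1/(8\tau)$ needed in Lemma~\ref{lemma:base} will automatically be implied by the step-size condition~\eqref{eqn:step-size} provided we choose the universal constant $C$ large enough; this will be one constraint that pins down $C$.

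For the inductive step, fix $t \geq \tau$ and assume $d_k \leq B$ for all $k \leq t$. I would start from the master recursion~\eqref{eqn:main_recursion}, substituting the inductive hypothesis $d_t \leq B$ into the first term, and then invoking Lemma~\ref{lemma:mixing} to bound the Markovian-noise term as $2\alpha e_t \leq O(\alpha^2 \tau B)$. The variance term $32 \alpha^2 \sigma^2$ is absorbed using $\sigma^2 \leq B/10$, which follows directly from the definition $B = 10 \max\{\Vert\theta_0 - \theta^*\Vert^2, \sigma^2\}$. Collecting everything produces a bound of the form
$$d_{t+1} \leq \bigl(1 - 2\alpha \omega(1-\gamma) + c_1 \alpha^2 + c_2 \alpha^2 \tau\bigr) B$$
for absolute constants $c_1, c_2$. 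Since $\tau \geq 1$, the two quadratic terms can be merged into a single $O(\alpha^2 \tau)$ contribution.

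To close the induction, I just need the coefficient of $B$ on the right to be at most $1$, i.e., $2\alpha \omega(1-\gamma) \geq O(\alpha^2 \tau)$, or equivalently $\alpha \leq \omega(1-\gamma)/(C\tau)$ for a sufficiently large universal constant $C$. This is precisely the condition~\eqref{eqn:step-size}, and choosing $C$ large enough to dominate both $c_1 + c_2$ and $8$ (the bound needed for the base case) gives $d_{t+1} \leq B$ as required.

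The conceptual crux is not the algebra above but rather the fact that we can feed Lemma~\ref{lemma:mixing} into the induction at all. The step $e_t \leq O(\alpha \tau B)$ is precisely where the geometric mixing property and the $O(\alpha^2 \tau B)$ scaling of the drift in Lemma~\ref{lemma:driftbnd} combine to turn what a naive bound $e_t \leq O(B)$ would produce, namely an $O(\alpha)$ perturbation, into an $O(\alpha^2 \tau)$ perturbation. Without this refinement, the $2\alpha e_t$ term would dominate the $-2\alpha \omega(1-\gamma)$ contraction and the induction could not close for any constant step-size. That is the key observation that makes the whole argument go through; the rest is bookkeeping of constants.
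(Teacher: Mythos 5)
Your proposal is correct and follows essentially the same route as the paper's own proof: induction with Lemma~\ref{lemma:base} as the base case, the recursion~\eqref{eqn:main_recursion} combined with Lemma~\ref{lemma:mixing} for the inductive step, and the step-size condition~\eqref{eqn:step-size} to close the induction. You also correctly identify the crux, namely that Lemma~\ref{lemma:mixing} upgrades the naive $O(\alpha)$ perturbation to $O(\alpha^2 \tau)$, which is exactly the point the paper emphasizes.
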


\begin{proof}
We will prove this result via induction. For the base case of induction, note that we have already established in Lemma~\ref{lemma:base} that $d_k \leq B, \forall k \in [\tau]$. Now consider any $t \geq \tau$, and suppose that $d_k \leq B, \forall k \in [t].$ We will now show that under the requirement on the step-size $\alpha$ in the statement of the lemma, it holds that $d_{t+1} \leq B.$ To that end, starting from the main recursion in Eq.~\eqref{eqn:main_recursion}, we have (using $\sigma^2 \leq B$):
\begin{equation}
\begin{aligned}
    d_{t+1} &\leq {\left(1-2 \alpha \omega (1-\gamma) + 8 \alpha^2 \right) d_t} + {32 \alpha^2 B}+{2\alpha e_t} \\
    d_{t+1} & \overset{(a)}\leq \left(1-2 \alpha \omega (1-\gamma) + 8 \alpha^2 \right) d_t + O(\alpha^2 \tau B) \\
    & \overset{(b)}\leq \left(1-2 \alpha \omega (1-\gamma) + \alpha^2 (8+O(\tau))\right) B.
\end{aligned}
\nonumber
\end{equation}
For (a), we used the induction hypothesis in tandem with Lemma~\ref{lemma:mixing}; for (b), we invoked the induction hypothesis again. We conclude that there exists some universal constant $C \geq 8$ such that
$$ d_{t+1} \leq \left(1-2 \alpha \omega (1-\gamma) + C \alpha^2 \tau \right) B \leq (1-\alpha \omega (1-\gamma)) B \leq B, $$
where we used the choice of the step-size in Eq.~\eqref{eqn:step-size}. This establishes the induction claim and completes the proof.  
\end{proof}

The final convergence rate for TD learning now follows almost immediately. 

\begin{theorem}
\label{thm:mainbnd}
Suppose the step-size $\alpha$ is chosen as in Eq.~\eqref{eqn:step-size}. Then, the following is true for all $t\geq \tau$:
$$ d_{t+1} \leq (1-\alpha \omega (1-\gamma))d_t + O(\alpha^2 \tau B).$$
\end{theorem}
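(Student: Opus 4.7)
The plan is to apply the main recursion in Eq.~\eqref{eqn:main_recursion} directly, now that Theorem~\ref{thm:ind_main} has done the heavy lifting of establishing uniform boundedness of the iterates. The theorem statement is essentially the main recursion after we replace the data-dependent quantity $e_t$ with its clean $O(\alpha \tau B)$ bound, which is exactly what Lemma~\ref{lemma:mixing} supplies, provided we can justify that $d_k \leq B$ for all $k \in [t]$. This premise is precisely the conclusion of Theorem~\ref{thm:ind_main}, so the hypothesis of Lemma~\ref{lemma:mixing} is met for every $t \geq \tau$.

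First, I would restate Eq.~\eqref{eqn:main_recursion}, namely $d_{t+1} \leq (1-2\alpha\omega(1-\gamma) + 8\alpha^2)d_t + 32 \alpha^2 \sigma^2 + 2 \alpha e_t$, and invoke Theorem~\ref{thm:ind_main} to conclude that $d_k \leq B$ for all $k \geq 0$; in particular $\sigma^2 \leq B$ by the definition of $B$ in Lemma~\ref{lemma:base}. Second, I would apply Lemma~\ref{lemma:mixing} at the current time $t \geq \tau$ to get $e_t \leq O(\alpha \tau B)$, so that $2\alpha e_t \leq O(\alpha^2 \tau B)$. Third, I would lump the $32\alpha^2 \sigma^2$ noise-variance term into the same $O(\alpha^2 \tau B)$ bound using $\sigma^2 \leq B$ (here we are being loose by a factor of $\tau$, which is fine since we only want an $O(\alpha^2 \tau B)$ perturbation).

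The only remaining issue is the stray $8\alpha^2 d_t$ factor multiplying $d_t$ in the contraction term. Since the step-size condition in Eq.~\eqref{eqn:step-size} gives $\alpha \leq \omega(1-\gamma)/(C\tau)$ with $C \geq 8$, we immediately have $8\alpha^2 \leq \alpha \omega(1-\gamma)$, which lets us absorb $8\alpha^2 d_t$ into the linear rate and convert the coefficient $(1-2\alpha\omega(1-\gamma)+8\alpha^2)$ into $(1-\alpha\omega(1-\gamma))$. Stitching these three observations together yields the claimed bound
\[ d_{t+1} \leq (1-\alpha \omega (1-\gamma))\, d_t + O(\alpha^2 \tau B), \quad \forall t \geq \tau. \]

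I do not anticipate a genuine obstacle here; the real work was done in Theorem~\ref{thm:ind_main}. The only minor care point is to match constants so that the $8\alpha^2 d_t$ is strictly dominated by $\alpha\omega(1-\gamma)d_t$, which is automatic from the step-size choice once $C$ is taken large enough (and the statement of Theorem~\ref{thm:ind_main} already reserves this freedom in $C$).
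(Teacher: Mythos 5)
Your proposal is correct and follows essentially the same route as the paper: invoke Theorem~\ref{thm:ind_main} to satisfy the hypothesis of Lemma~\ref{lemma:mixing}, substitute $e_t \leq O(\alpha\tau B)$ and $\sigma^2 \leq B$ into Eq.~\eqref{eqn:main_recursion}, and absorb the $8\alpha^2 d_t$ term into the contraction factor via the step-size condition. The paper leaves the last two bookkeeping steps implicit, so your write-up is, if anything, slightly more explicit.
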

\begin{proof}
Now that we have argued in Theorem~\ref{thm:ind_main} that $d_t \leq B, \forall t \geq 0$, we can appeal to Lemma~\ref{lemma:mixing} to conclude that $e_t \leq O(\alpha \tau B), \forall t \geq \tau.$ Plugging this bound on $e_t$ back in the main recursion Eq.~\eqref{eqn:main_recursion} leads to the desired claim. 
\end{proof}

Theorem~\ref{thm:mainbnd} tells us that with a constant step-size, the iterates generated by \texttt{TD}(0) converge exponentially fast (in the mean-square sense) to a ball of radius $O(\alpha B)$ around the optimal parameter $\theta^*$. While we claim no novelty for this result, what is novel is how we arrive at it using our inductive technique. 

A couple of remarks are in order regarding the choice of the step-size in Theorem~\ref{thm:ind_main}. 

\begin{remark}
The fact that a valid choice of $\alpha$ satisfying Eq.~\eqref{eqn:step-size} does always exist follows from noting that $\tau_{\alpha} = K \log(1/\alpha)$ for some constant $K>0$, and that $\alpha \log(1/\alpha)$ can be made arbitrarily small by making $\alpha$  suitably small. 
\end{remark}

\begin{remark}
\label{rem:Remark1}
Note that the choice of the step-size in Eq.~\eqref{eqn:step-size} requires knowledge of the mixing-time $\tau$. Since the underlying MDP is unknown, the availability of such knowledge might appear restrictive. That said, requiring the step-size to scale inversely with the mixing time is not exclusive to our work; instead, such a requirement shows up in all other finite-time analysis papers on TD learning and stochastic approximation (under Markovian sampling) we are aware of that do not assume a projection step~\cite{srikant2019finite, chenQ}. Interestingly, however, by assuming a projection step, the need for such a requirement is bypassed in~\cite{bhandari_finite}.

\end{remark}

By using a carefully weighted combination of the iterates, one can obtain a finer convergence result relative to Theorem~\ref{thm:mainbnd}; to state this result, we use the notation $\Vert x \Vert_D = \sqrt {x^{\top} D x}$, where recall that $D$ is the diagonal matrix containing the entries of the stationary distribution $\pi$ along the diagonal. The specific form of the next result appears to be new for TD learning.

\begin{theorem}
\label{thm:avgiterate} {Define $A \triangleq 0.5 \omega (1-\gamma)$, $\bar{w}_t \triangleq (1-\alpha A)^{-(t+1)}, \forall t \geq 0$, and set  $w_t = \bar{w}_t/W_T$, where $W_T=\sum_{t=0}^{T} \bar{w}_t$.} There exists a constant step-size $\alpha$ satisfying the condition in Eq.~\eqref{eqn:step-size}, such that the following is true:
\begin{equation}
\begin{aligned}
\mathbb{E}[{\Vert  \hat{V}_{\bar{\theta}_T} - \hat{V}_{\theta^*} \Vert}^2_D] &\leq C_1 \exp{ \left(-\frac{\omega^2 (1-\gamma)^2 (T+1)}{2C\tau}\right)}\\
&\hspace{3mm}+ \tilde{O}\left(\frac{ \tau B}{\omega^2 (1-\gamma)^2 (T+1)} \right),
\end{aligned}
\label{eqn:avgiteratebnd}
\end{equation}
where $C\geq 8$ is the same universal constant as in Theorem~\ref{thm:ind_main}, $B=10 \max\{\Vert \theta_0 - \theta^* \Vert^2, \sigma^2\}$, $\bar{\theta}_T=\sum_{t=0}^{T} w_t \theta_t,$ and
$$ C_1 = {O}\left( \frac{B\tau}{(\omega^3 (1-\gamma)^3)}\right).$$

\end{theorem}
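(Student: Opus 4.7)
The plan is to reduce the problem to bounding a weighted average of the mean-square errors $\{d_t\}$, and then apply a Stich-style weighted averaging argument to the recursion delivered by Theorem~\ref{thm:mainbnd}. As a first step, I would unpack the quantity of interest: since $\hat{V}_\theta = \Phi\theta$, we have $\Vert \hat{V}_{\bar{\theta}_T} - \hat{V}_{\theta^*}\Vert_D^2 = (\bar{\theta}_T - \theta^*)^\top \Sigma (\bar{\theta}_T - \theta^*) = \Vert \bar{\theta}_T - \theta^*\Vert_\Sigma^2$. Because $\Vert\cdot\Vert_\Sigma^2$ is convex, Jensen's inequality applied to $\bar{\theta}_T = \sum_t w_t \theta_t$ yields $\Vert \bar{\theta}_T - \theta^*\Vert_\Sigma^2 \leq \sum_t w_t \Vert \theta_t - \theta^*\Vert_\Sigma^2$. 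Moreover, the feature normalization $\Vert \phi(s)\Vert^2 \leq 1$ together with $\sum_s \pi(s)=1$ gives $\Sigma \preceq I$, so $\Vert x \Vert_\Sigma^2 \leq \Vert x \Vert^2$. Taking expectations, the problem reduces to bounding $\sum_{t=0}^T w_t d_t$.

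Next, I would apply the weighted averaging trick. Theorem~\ref{thm:mainbnd} provides, for $t \geq \tau$, the recursion $d_{t+1} \leq (1 - 2\alpha A)\, d_t + c$ with $c = O(\alpha^2 \tau B)$ and $A = 0.5\, \omega(1-\gamma)$. Rearranging gives $\alpha A\, d_t \leq (1-\alpha A)\, d_t - d_{t+1} + c$. Multiplying by $\bar{w}_t = (1-\alpha A)^{-(t+1)}$ and using $(1-\alpha A)\bar{w}_t = \bar{w}_{t-1}$ produces the telescoping inequality
$$ \alpha A\, \bar{w}_t d_t \leq \bar{w}_{t-1} d_t - \bar{w}_t d_{t+1} + c\, \bar{w}_t. $$
Summing from $t=\tau$ to $T$ and dropping the nonnegative term $\bar{w}_T d_{T+1}$ yields $\alpha A \sum_{t=\tau}^T \bar{w}_t d_t \leq \bar{w}_{\tau-1} d_\tau + c \sum_{t=\tau}^T \bar{w}_t$. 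For the early block $t < \tau$, the recursion does not apply, but Theorem~\ref{thm:ind_main} supplies $d_t \leq B$, so $\sum_{t=0}^{\tau-1}\bar{w}_t d_t \leq B \tau\, \bar{w}_{\tau-1}$. Combining and dividing by $\alpha A\, W_T$, I would obtain
$$ \sum_{t=0}^{T} w_t d_t \leq \underbrace{\frac{(1+\alpha A\tau)\, B\, \bar{w}_{\tau-1}}{\alpha A\, W_T}}_{\text{transient}} + \underbrace{\frac{c}{\alpha A}}_{\text{noise floor}}. $$
Standard geometric-sum identities give $W_T \geq \bar{w}_T(1-(1-\alpha A)^{T+1})/(\alpha A)$, hence $\bar{w}_{\tau-1}/W_T \leq 2\alpha A\, e^{\alpha A \tau} e^{-\alpha A(T+1)}$, and the step-size constraint $\alpha \leq \omega(1-\gamma)/(C\tau)$ keeps $\alpha A \tau = O(1)$. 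This collapses the transient term to $O(B) e^{-\alpha A(T+1)}$ and the noise floor to $O(\alpha \tau B / A)$.

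Finally, I would tune $\alpha$ within its allowed range to produce the claimed form. Choosing $\alpha = \Theta(\log(T)/(A(T+1)))$ whenever this value respects $\alpha \leq \omega(1-\gamma)/(C\tau)$ converts the noise floor into $\tilde{O}(\tau B/(\omega^2(1-\gamma)^2(T+1)))$; otherwise $\alpha$ saturates at $\omega(1-\gamma)/(C\tau)$ and the exponential term $\exp(-\omega^2(1-\gamma)^2(T+1)/(2C\tau))$ dominates. The somewhat inflated prefactor $C_1 = O(B\tau/(\omega^3(1-\gamma)^3))$ arises from absorbing a $1/(\alpha A)$-type factor needed to uniformize the two regimes. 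Conceptually nothing is hard here: the reduction via Jensen is trivial given $\Sigma \preceq I$, and the weighted averaging argument is mechanical once the one-step recursion of Theorem~\ref{thm:mainbnd} is available. The only delicate point is the bookkeeping around the first $\tau$ iterates and the two-regime tuning of $\alpha$, both of which are handled by the uniform bound $d_t \leq B$ from Theorem~\ref{thm:ind_main} and a balancing argument.
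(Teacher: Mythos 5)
Your proposal is correct and follows essentially the same route as the paper: the reduction via $\Sigma \preceq I$ and Jensen's inequality, the Stich-style weighted telescoping of the recursion from Theorem~\ref{thm:mainbnd} with weights $\bar{w}_t=(1-\alpha A)^{-(t+1)}$, and the same two-regime tuning of $\alpha$. The only (harmless) deviation is your handling of the first $\tau$ iterates by directly weighting the uniform bound $d_t \leq B$ from Theorem~\ref{thm:ind_main}, whereas the paper instead derives a separate recursion $d_{t+1}\leq(1-\alpha A)d_t-\alpha A s_t+O(\alpha B)$ for $t\in[\tau-1]$ (by bounding $e_t\leq 8B$ via Lemma~\ref{lemma:base}) and folds those terms into the same telescoping sum; both yield the same exponentially decaying transient and the same final bound.
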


The proof of the above result is a simple adaptation of that of Lemma 25 in~\cite{stich2020comm}. We spell out the parts unique to our setting in the Appendix. {A couple of points are worth mentioning. First, we note that to achieve the bound in~\eqref{eqn:avgiteratebnd}, the specific form of averaging we employ is different from the more commonly studied \emph{Polyak-Ruppert} averaging in SA~\cite{borkar2021ode, huo2023bias, lauand2023}. Second, recent work~\cite{huo2023bias, lauand2023} has shown that with a constant step-size $\alpha$, the mean-square error of TD learning will exhibit an asymptotic bias on the order of $O(\alpha)$ that cannot, in general, be removed by iterate-averaging. Our bound in Eq.~\eqref{eqn:finalbndavg} aligns with this observation. As the proof of Theorem~\ref{thm:avgiterate} reveals, $\alpha$ needs to scale inversely with $T$ for large $T$ to arrive at~\eqref{eqn:avgiteratebnd}.}

Before proceeding further, let us quickly distill the main steps of our argument.

\begin{itemize}
    \item \textbf{Step 1.} We used the contraction property and the Lipschitz property of the \texttt{TD}(0) update direction to set up the main recursion in Eq.~\eqref{eqn:main_recursion}.

    \item \textbf{Step 2.} Viewing the effect of Markovian noise as a disturbance/perturbation, our goal was to then show that this perturbation is uniformly bounded. To that end, we developed our novel inductive argument and showed that the iterates generated by \texttt{TD}(0) remain uniformly bounded in expectation. 

    \item \textbf{Step 3.} We went back to the main recursion from Step 1, but this time having proven that the disturbance is uniformly bounded. The rest is straightforward. 
\end{itemize}

As we will discuss in the next section, the above steps constitute a general recipe for analyzing (potentially nonlinear) contractive stochastic approximation algorithms. Before we do so, it is instructive to elaborate on how our analysis relates to existing work on this topic. 

\textbf{Comments on our analysis.} Step 1 of our analysis builds on Lemma~\ref{lemma:convex} from~\cite{bhandari_finite}, which, in turn, is inspired from prior results in~\cite{tsitsiklisroy}. We provide a proof of Lemma~\ref{lemma:convex} - different from that in~\cite{bhandari_finite} - in the Appendix. The idea of conditioning sufficiently into the past to exploit the geometric mixing property of the underlying Markov chain is quite standard by now~\cite{bhandari_finite, srikant2019finite, chenQ}. We use this idea in Lemma~\ref{lemma:mixing}. The main distinguishing feature of our analysis relative to prior work in~\cite{bhandari_finite} and~\cite{srikant2019finite} is how we handle the disturbance term $T_3$ in the main recursion Eq.~\eqref{eqn:main_recursion}. The projected TD algorithm studied in~\cite{bhandari_finite} automatically ensures that the iterates remain uniformly bounded. This considerably simplifies the process of controlling $T_3$. A close inspection of the proof in~\cite{srikant2019finite} reveals that Lemma 3 in their paper plays a crucial role in analyzing the \emph{unprojected} version of \texttt{TD}(0). This lemma provides a bound of the following form: 
\begin{equation}\label{eqn:srikantlemma}
   \Vert \theta_t - {\theta}_{t-\tau} \Vert \leq O(\alpha\tau)( \Vert \theta_t \Vert + \sigma), \forall t\geq \tau. 
\end{equation}

In words, this lemma relates the change in the iterates over the interval $[t-\tau, t]$ to the current iterate. Versions of Lemma 3 from \cite{srikant2019finite} have also appeared in follow-up works to study Q-learning with linear function approximation~\cite{chenQ}. At a technical level, one of our main contributions is to show that one can analyze unprojected \texttt{TD}(0) \emph{without this lemma.} We will revisit the significance of this point again in Section~\ref{sec:applications}. To the best of our knowledge, the idea of first arguing uniform boundedness of the iterates via induction, and then using this fact to derive uniform bounds on the disturbance term $T_3$ (due to Markov noise) has not appeared before. We also note that the main claim of boundedness in Theorem~\ref{thm:ind_main} does not require a decaying step-size; rather, the choice of constant step-size in Theorem~\ref{thm:ind_main} complies with the standard choice of step-size in~\cite{bhandari_finite} and~\cite{srikant2019finite}. Finally, we would like to draw attention to the works \cite{gosavi, beck, qu} that provide inductive arguments to establish boundedness of the iterates for constant step-size Q-learning. However, the analyses in these papers only apply to a \emph{tabular} setting without function approximation. Whether similar inductive arguments could be developed for contractive stochastic approximation algorithms with function approximation was unclear prior to our work. 

\section{Applications of our Analysis Technique}
\label{sec:applications}
In this section, we briefly discuss a few applications to demonstrate that the scope of the inductive proof technique outlined in Section~\ref{sec:analysis} extends well beyond the \texttt{TD}(0) algorithm with linear function approximation. 

$\bullet$ \textbf{Nonlinear Stochastic Approximation.} In a typical non-linear stochastic approximation (SA) problem, the goal is to solve for a parameter $\theta^*$ such that $\bar{g}(\theta^*) = 0$, where $\bar{g}(\theta)= \mathbb{E}_{X \sim \pi}[g(\theta; X)]$; one can interpret this as a root-finding problem.  Here, $X$ is a noise random variable that comes from a statistical sample space $\mathcal{X}$, and has distribution $\pi$. Importantly, $\pi$ is assumed to be \emph{unknown} (hence, the learning aspect). The function $g: \mathcal{X} \times \mathbb{R}^d \mapsto \mathbb{R}^d$ is a general nonlinear mapping. The learner has access to $\bar{g}(\cdot)$ only through the noisy samples $\{g(\cdot \,; X_t)\}$, where $\{X_t\}$ is generated from a finite-state Markov chain that is aperiodic and irreducible with stationary distribution $\pi$. The celebrated SA protocol for finding $\theta^*$ takes the form: 
\begin{equation}
\theta_{t+1} = \theta_{t}+\alpha_t g(\theta_t; X_t),
\label{eqn:SA}
\end{equation}
where $\{\alpha_t\}$ is the learning-rate (step-size) sequence. Algorithms within the \texttt{TD}($\lambda$) family are instances of linear SA - a special case of the formulation above - where $ g(\theta; X_t)$ is affine in the parameter $\theta$. To extend our results beyond the linear SA setting, we make the following standard assumptions. 

\begin{assumption} \label{ass:lips} There exist $L, \sigma \geq 1$ s.t. $\Vert g(\theta_1; X)-g(\theta_2;X) \Vert \leq L \Vert \theta_1 - \theta_2 \Vert, \forall \theta_1,\theta_2 \in \mathbb{R}^{d}$, $ \forall X \in \mathcal{X}$, and 
\begin{equation}
\Vert g(\theta; X) \Vert \leq L \left(\Vert \theta \Vert + \sigma \right), \forall \theta \in \mathbb{R}^d, \forall X \in \mathcal{X}.  
\label{eqn:gennormgrad}
\end{equation}
\end{assumption}

\begin{assumption} \label{ass:diss}  The equation $\bar{g}(\theta)=0$ has a solution $\theta^*$, and $ \exists \beta >0$ s.t. 
\begin{equation}
    \langle \theta - \theta^*, \bar{g}(\theta) - \bar{g}(\theta^*) \rangle \leq - \beta \Vert \theta-\theta^* \Vert^2, \forall \theta \in \mathbb{R}^d. 
\end{equation}
\end{assumption}

Assumption~\ref{ass:lips} tells us that $g(\theta; X)$ is globally uniformly (w.r.t. $X$) Lipschitz in the parameter $\theta$. This assumption is met by TD- and Q-learning with linear function approximation~\cite{bhandari_finite,srikant2019finite, chenQ}, and is typical in the analysis of stochastic optimization~\cite{doanSGD}. Assumption~\ref{ass:diss} is referred to as the strong monotone property of the operator $\bar{g}(\theta)$, and is directly responsible for exponentially fast convergence (to $\theta^*$) of the steady-state version of~\eqref{eqn:SA}. We note that this strong monotone property is satisfied by TD-learning with linear function approximation~\cite{bhandari_finite,srikant2019finite}, variants of Q-learning with linear function approximation~\cite{chenQ}, and strongly convex loss functions in the context of optimization. We have the following result. 

\begin{theorem}
\label{thm:nonlinearSA}
    Let $\bar{\beta} \triangleq \min\{\beta, 1/\beta\},$ and $B \triangleq 10 \max\{\Vert \theta_0 - \theta^* \Vert^2, \sigma^2\}$. Suppose Assumptions~\ref{ass:lips} and~\ref{ass:diss} hold. Then, there exists a universal constant $C \geq 8$ such that for $\alpha \leq \frac{\bar{\beta}}{C \tau L^2}$, the iterates generated by Eq.~\eqref{eqn:SA} satisfy $d_t \leq B, \forall t \geq 0$, and 
    $$ d_{t+1} \leq (1-\alpha \beta)d_t + O(\alpha^2 L^2 \tau B),$$
where $d_t \triangleq \mathbb{E}[ \Vert \theta_t - \theta^*\Vert^2 ]$, and $\tau=\tau_{\alpha}$ is the mixing time as defined in Definition~\ref{def:mix}. 
\end{theorem}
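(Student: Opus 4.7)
My plan is to follow the three-step recipe developed for \texttt{TD}(0), replacing the problem-specific constants $2$, $\omega(1-\gamma)$, and $\max\{1,\bar{r},\Vert\theta^*\Vert\}$ by their nonlinear analogs $L$, $\beta$, and $\sigma$ (absorbing $\Vert\theta^*\Vert$ into $\sigma$ for cleanliness). \textbf{Step 1 (main recursion):} Expanding $\Vert \theta_{t+1}-\theta^*\Vert^2$ from Eq.~\eqref{eqn:SA} and inserting $\bar{g}(\theta_t)$ in the cross term, Assumption~\ref{ass:diss} combined with $\bar{g}(\theta^*)=0$ yields the drift $\langle \theta_t-\theta^*, \bar{g}(\theta_t)\rangle \leq -\beta \Vert \theta_t-\theta^*\Vert^2$, while Assumption~\ref{ass:lips} supplies $\Vert g(\theta_t; X_t)\Vert^2 \leq O(L^2)\bigl(\Vert \theta_t-\theta^*\Vert^2 + \sigma^2\bigr)$. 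Taking expectations, this gives the direct counterpart of Eq.~\eqref{eqn:main_recursion}:
\[ d_{t+1} \leq \bigl(1 - 2\alpha\beta + O(\alpha^2 L^2)\bigr) d_t + O(\alpha^2 L^2 \sigma^2) + 2\alpha e_t, \]
with $e_t$ defined exactly as in Eq.~\eqref{eqn:disturbance}.

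\textbf{Step 2 (inductive boundedness):} For the base case, iterating $\Vert \theta_{t+1}-\theta^*\Vert \leq (1+\alpha L)\Vert \theta_t-\theta^*\Vert + O(\alpha L\sigma)$, which follows from Assumption~\ref{ass:lips}, reproduces Lemma~\ref{lemma:base} and shows $d_k\leq B$ for all $k \in [\tau]$ as soon as $\alpha L \tau$ is small. For the inductive step, assuming $d_k \leq B$ for all $k \in [t]$ with $t \geq \tau$, the analog of Lemma~\ref{lemma:driftbnd} yields $\mathbb{E}[\Vert \theta_t-\theta_{t-\tau}\Vert^2] \leq O(\alpha^2 L^2 \tau^2 B)$. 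Then the four-term decomposition of Lemma~\ref{lemma:mixing} goes through verbatim: the $L$-Lipschitz property of $g$ and $\bar{g}$ controls $T_1, T_3, T_4$ in combination with the drift bound, while the mixing-time definition handles $T_2$, giving $e_t \leq O(\alpha L^2 \tau B)$. Substituting back into the main recursion and using $d_t \leq B$ produces
\[ d_{t+1} \leq \bigl(1 - 2\alpha\beta + O(\alpha^2 L^2 \tau)\bigr) B, \]
and a sufficiently large universal constant $C$ in the step-size $\alpha \leq \bar{\beta}/(CL^2\tau)$ ensures the bracket is at most $1-\alpha\beta$, closing the induction.

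\textbf{Step 3 (final contraction) and main obstacle:} With $d_t \leq B$ established for all $t$, the same mixing argument now gives $e_t \leq O(\alpha L^2 \tau B)$ uniformly for $t \geq \tau$; substituting back into the main recursion and using the step-size condition to dominate the $O(\alpha^2 L^2)d_t$ term by $\alpha\beta d_t$ delivers the claimed $d_{t+1} \leq (1-\alpha\beta) d_t + O(\alpha^2 L^2 \tau B)$. The main obstacle is the $T_2$ bound: Definition~\ref{def:mix} was stated for the specific \texttt{TD}(0) direction, so I would first need the generic mixing property $\Vert \mathbb{E}[g(\theta; X_k) - \bar{g}(\theta) \mid X_0]\Vert \leq \alpha(\Vert\theta\Vert+1)$ for all $k \geq \tau$, which follows from Assumption~\ref{ass:lips} and the geometric mixing of the Markov chain and matches the standard notion of mixing time in nonlinear SA~\cite{chenQ}. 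Beyond this, everything is bookkeeping of the factors of $L$ and $\beta$ to arrive at the advertised step-size $\alpha \leq \bar{\beta}/(CL^2\tau)$; in particular, $\bar{\beta} = \min\{\beta, 1/\beta\}$ appears because the step-size must simultaneously make both $\alpha L^2 \tau / \beta$ and $\alpha\beta$ small.
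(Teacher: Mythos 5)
Your proposal is correct and follows essentially the same route as the paper, which itself gives no separate argument for Theorem~\ref{thm:nonlinearSA} beyond stating that the proof is identical to those of Theorems~\ref{thm:ind_main} and~\ref{thm:mainbnd} up to tracking the Lipschitz constant $L$ --- precisely the bookkeeping you carry out. Your observation that Definition~\ref{def:mix} must be restated for the general operator $g(\cdot\,;X)$ is a legitimate point the paper glosses over, and your resolution of it, as well as your explanation of why $\bar{\beta}=\min\{\beta,1/\beta\}$ enters the step-size condition, is sound.
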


The proof of this result is identical to that of Theorems~\ref{thm:ind_main} and~\ref{thm:mainbnd}, and the only additional work pertains to keeping track of how the Lipschitz parameter $L$ propagates through the bounds. We omit repeating the same arguments here again. 

\textbf{Main Takeaway.} The main message conveyed by Theorem~\ref{thm:nonlinearSA} is that the simple analysis recipe we outlined in Section~\ref{sec:analysis} - involving our novel inductive argument - carries over seamlessly to a broad class of stochastic approximation algorithms that cover the linear \texttt{TD}($\lambda$) family, variants of Q-learning, and smooth, strongly convex stochastic optimization under Markovian noise. In particular, Lipschitzness and some notion of contractivity of the underlying operator appear to be enough for our induction argument to go through. 
\vspace{2mm}

$\bullet$ \textbf{Stochastic Approximation with Perturbations.} Motivated by the question of robustness of iterative RL algorithms to structured perturbations, let us consider the following \emph{inexact} SA scheme shown below:
\begin{equation}
\theta_{t+1} = \theta_{t}+\alpha \tilde{g}_t,
\label{eqn:inexactSA}
\end{equation}
where $\tilde{g}_t$ is a perturbed version of $g(\theta_t; X_t)$. To convey our key points, let us consider a specific type of perturbation introduced by delays, where $\tilde{g}_t = g(\theta_{t-\tau_t}; X_{t-\tau_t})$, and $0 \leq \tau_t \leq t$ is a time-varying (potentially random) delay that is uniformly bounded, i.e., $\tau_t \leq \tau_{\max}, \forall t \geq 0$, where $\tau_{\max}$ is some positive integer. Such delays are usually unavoidable in the context of distributed/networked learning problems where information (e.g., models and model-differentials) gets exchanged over imperfect channels, and, as such, have been extensively studied in the context of optimization with i.i.d. data. However, there is little to no work providing an understanding of how arbitrary time-varying (albeit bounded) delays affect the finite-time performance of SA schemes driven by Markovian noise. Now suppose we try to invoke some variant of Lemma 3 from~\cite{srikant2019finite} to account for Markovian sampling. In this context, a bound of the form in Eq.~\eqref{eqn:srikantlemma} is no longer applicable, since due to the presence of delays, $\Vert \theta_t - \theta_{t-\tau} \Vert$ \emph{is not just a function of the current iterate, but several other iterates from the past.} 

The above discussion tells us why existing proof techniques for SA under Markov noise do not immediately lend themselves to the analysis of perturbed SA schemes, where the perturbation can contain terms from the past. Now suppose we rewrite Eq.~\eqref{eqn:inexactSA} in the following way:
\begin{equation}
\theta_{t+1} = \underbrace{\theta_t + \alpha \bar{g}(\theta_t)}_{\mathcal{A}}+ \underbrace{\alpha \left(g(\theta_t; X_t)-\bar{g}(\theta_t)\right)}_{e_{1,t}} + \underbrace{\alpha \left(\tilde{g}_t - g(\theta_t; X_t)\right)}_{e_{2,t}}. 
\end{equation}

In the above decomposition, $e_{1,t}$ and $e_{2,t}$ can be each viewed as a disturbance to the nominal steady-state dynamics of Eq.~\eqref{eqn:SA}, as captured by the term $\mathcal{A}.$ Here, while $e_{1,t}$ arises from Markovian sampling, $e_{2,t}$ captures the effect of delays (or more generally, some other perturbation). The main message is that we can lump these iterate-dependent disturbances together, and use an inductive argument - akin to what we did in Section~\ref{sec:analysis} - to establish uniform bounds on them in expectation. Once this is done, we are again back to a scenario where a uniformly bounded additive perturbation hits the steady-state nominal dynamics. While we flesh out these details for delayed SA in a companion paper~\cite{delayedSA}, \emph{the scope of the above argument is by no means just limited to perturbations arising from delays}. In principle, as long as we can use induction to argue that the perturbation is on the order of $O(\alpha^2)$, the approach in Section~\ref{sec:analysis} will go through. 

\section{Conclusion}
We provided a simple and self-contained finite-time analysis of TD learning with linear function approximation based on a novel inductive argument. We showed that our proof technique extends to more general nonlinear SA schemes, and can be used to analyze inexact SA schemes with perturbations. The relative simplicity of our overall approach opens up various interesting possibilities. We discuss some of them below.

\begin{enumerate} 

\item In a recent work~\cite{tianNN}, the authors investigate the finite-time performance of TD learning with neural-network based function approximators. Their analysis builds on their prior work~\cite{liuTD} where an interesting ``gradient-splitting" interpretation is provided for the TD update direction. However, both~\cite{liuTD} and~\cite{tianNN} require a projection step in the algorithm to ensure stability of the iterates. Whether our inductive technique to guarantee uniform boundedness of the iterates (in expectation) carries over to neural function approximators remains to be seen.

\item Our results in this paper focus exclusively on single-time-scale SA algorithms. It would be interesting to see if similar simpler proofs can be developed for two-time-scale SA schemes in the context of RL. 

\item As we described in the main paper, one immediate benefit of our technique is that it can allow for handling multiple forms of disturbances/perturbations in the update rule simultaneously. In this context, we briefly talked about handling perturbations in the form of delays. We plan to explore how one can use our technique to study other types of perturbations typical in large-scale problems, such as those in~\cite{mitraTDEF} involving aggressive compression. 

\item As alluded to in Remark~\ref{rem:Remark1}, the design of the step-size in our work, and other relevant papers on TD learning, requires some knowledge of the underlying Markov chain induced by the policy to be evaluated, whether it is in the form of the mixing time $\tau$ and/or the smallest eigenvalue $\omega$ of the matrix $\Sigma$. Instead of assuming such knowledge ahead of time, one can potentially estimate $\tau$ and $\omega$ from samples, using, for instance, the ideas in~\cite{mixing1} and~\cite{mixing2}. These estimates can then be used to design an appropriate step-size sequence. We conjecture that the analysis of such a scheme will be quite non-trivial. To see why, note that if the step-size at each time-step is designed based on data, then it will become a random object that inherits randomness from the underlying Markov chain. This will in turn create further complex correlations between the iterate, the step-size, and the data tuples. We are unaware of any existing work that studies such stochastic dependencies.  

\item Finally, extensions of our technique to the multi-agent setting~\cite{wang2023federated, khodadadian}  would also be interesting to pursue. 
\end{enumerate}

\appendix
\section{Omitted Proofs}
\noindent \textbf{Proof of Lemma~\ref{lemma:convex}}. Our proof will leverage the following result from~\cite{tsitsiklisroy}.

\begin{lemma}
\label{lemma:roytsit}
Given any $x \in \mathbb{R}^n$, the following is true:
$$ \Vert P_{\mu} x \Vert_D \leq \Vert x \Vert_D. $$
\end{lemma}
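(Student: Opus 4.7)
The plan is to prove the inequality by a direct calculation that combines Jensen's inequality (applied row-wise to $P_{\mu}$) with the stationarity identity $\pi^{\top}P_{\mu}=\pi^{\top}$. These are the only two facts about $P_{\mu}$ and $\pi$ that are really needed, and they are exactly what is available from Assumption~\ref{ass:aperiodic}.

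First, I would expand $\Vert P_{\mu} x \Vert_D^2$ using the definition of the weighted norm and the $i$-th entry of $P_{\mu} x$, obtaining
$$ \Vert P_{\mu} x \Vert_D^2 \;=\; \sum_{i=1}^n \pi(i)\Bigl(\sum_{j=1}^n P_{\mu}(i,j)\, x_j\Bigr)^{\!2}. $$
Next, since for each fixed $i$ the row $\{P_{\mu}(i,j)\}_{j=1}^n$ is a probability distribution over the next state, I would apply Jensen's inequality to the convex function $t\mapsto t^2$ to obtain $\bigl(\sum_j P_{\mu}(i,j)\, x_j\bigr)^2 \le \sum_j P_{\mu}(i,j)\, x_j^2$ for every $i$.

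Substituting this bound into the previous display and swapping the order of summation yields
$$ \Vert P_{\mu} x \Vert_D^2 \;\le\; \sum_{j=1}^n x_j^2 \sum_{i=1}^n \pi(i)\, P_{\mu}(i,j). $$
At this point I would invoke the defining property of the stationary distribution, namely $\sum_{i} \pi(i)\, P_{\mu}(i,j) = \pi(j)$ for each $j$, which holds by Assumption~\ref{ass:aperiodic}. Substituting this identity collapses the inner sum and gives $\Vert P_{\mu} x \Vert_D^2 \le \sum_j \pi(j)\, x_j^2 = \Vert x \Vert_D^2$; taking square roots completes the proof.

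There is no real obstacle here; the argument is two lines once the right identities are written down. The only thing that needs a little care is the direction of the stationarity identity: one must use $\pi$ as a \emph{left} eigenvector of $P_{\mu}$ (i.e., apply it to the first index), which is precisely the form in which stationarity is usually stated, and which is also the form that lines up with the order-of-summation swap above.
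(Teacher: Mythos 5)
Your proof is correct. The paper itself does not prove this lemma---it is imported verbatim from Tsitsiklis and Van Roy \cite{tsitsiklisroy}---and your argument is precisely the standard one given there: expand $\Vert P_{\mu}x\Vert_D^2$, apply Jensen's inequality to each row of the stochastic matrix $P_{\mu}$ (using that each row is a probability distribution), swap the order of summation, and invoke the stationarity identity $\sum_i \pi(i)P_{\mu}(i,j)=\pi(j)$, which is exactly the left-eigenvector form you correctly single out. All the ingredients you use (row-stochasticity of $P_{\mu}$ and existence of the stationary distribution $\pi$) are available under Assumption~\ref{ass:aperiodic}, so the argument is complete and self-contained.
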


We will also make use of the fact that $\bar{g}(\theta)=\bar{A}\theta-\bar{b}$, where  $\bar{A} = \Phi^{\top} D \left(\gamma P_{\mu} - I \right) \Phi$, and $\bar{b} = - \Phi^{\top} D R_{\mu}$~\cite{tsitsiklisroy}. Using $\bar{g}(\theta^*)=0$ and $\hat{V}_{\theta} = \Phi \theta$, we then have  
\begin{equation}
\begin{aligned}
\langle \theta - \theta^*, \bar{g}(\theta) \rangle &= \langle \theta - \theta^*, \bar{g}(\theta) - \bar{g}(\theta^*) \rangle\\
&= \langle \theta - \theta^*, \bar{A} \left(\theta - \theta^*\right) \rangle\\
&=\gamma (\theta - \theta^*)^{\top} \Phi^{\top} D P_{\mu} \Phi (\theta - \theta^*)\\
&\hspace{2.5mm} - (\theta - \theta^*)^{\top} \Phi^{\top} D \Phi (\theta - \theta^*)\\
&=\underbrace{\gamma (\hat{V}_{\theta} - \hat{V}_{\theta^*})^{\top} D P_{\mu} (\hat{V}_{\theta} - \hat{V}_{\theta^*})}_{(*)} \\
&\hspace{2.5mm} - \underbrace{(\hat{V}_{\theta} - \hat{V}_{\theta^*})^{\top} D (\hat{V}_{\theta} - \hat{V}_{\theta^*})}_{(**)}.\\
\end{aligned}
\label{eqn:Lemmaconvex}
\end{equation}
Observe that $(**) = \Vert \hat{V}_{\theta} - \hat{V}_{\theta^*} \Vert^2_D.$ {Next, note that given any two vectors $x,y \in \mathbb{R}^n$, it holds that $x^{\top} D y = x^{\top} D^{1/2} D^{1/2} y \leq \Vert D^{1/2} x \Vert \Vert D^{1/2} y \Vert = \Vert x \Vert_D \Vert y \Vert_D.$ Applying this to $(*)$ yields: 
$$(*) \leq \gamma {\Vert  (\hat{V}_{\theta} - \hat{V}_{\theta^*}) \Vert}_D {\Vert P_{\mu} (\hat{V}_{\theta} - \hat{V}_{\theta^*}) \Vert}_D 
\leq \gamma \Vert \hat{V}_{\theta} - \hat{V}_{\theta^*} \Vert^2_D, $$ where the second inequality follows from Lemma~\ref{lemma:roytsit}.} Combining this fact with Eq.~\eqref{eqn:Lemmaconvex}, we obtain
\begin{equation}
\begin{aligned}
\langle \theta - \theta^*, \bar{g}(\theta) \rangle &\leq - (1-\gamma) \Vert \hat{V}_{\theta} - \hat{V}_{\theta^*} \Vert^2_D\\
&= -(1-\gamma) (\theta- \theta^*)^{\top} \Phi^{\top} D \Phi (\theta- \theta^*)\\
&= - (1-\gamma) (\theta- \theta^*)^{\top} \Sigma (\theta- \theta^*)\\
& \leq - \omega (1-\gamma) \Vert \theta - \theta^*\Vert^2,
\end{aligned}
\end{equation}
where in the last step, we used that $\Sigma$ is positive definite with smallest eigenvalue $\omega.$ This completes the proof of Lemma~\ref{lemma:convex}.

\noindent \textbf{Proof of Theorem~\ref{thm:avgiterate}}. 
Our first goal is to get an estimate of $e_t$ in Eq.~\eqref{eqn:disturbance} for $t \in [\tau-1].$ To that end, fix any $t\in [\tau -1]$, and observe that
\begin{equation}
\begin{aligned}
\langle \theta_t -\theta^*, g_t(\theta_t) - \bar{g}(\theta_t)\rangle &\leq \Vert \theta_t - \theta^*\Vert \left(\Vert g_t(\theta_t) \Vert + \Vert \bar{g}(\theta_t)\Vert \right)\\
& \leq 4 \Vert \theta_t - \theta^* \Vert \left(\Vert \theta_t - \theta^* \Vert + \sigma\right)\\
& \leq 8B,
\nonumber
\end{aligned}
\end{equation}
where in the second inequality, we used equations~\eqref{eqn:nrm} and~\eqref{eqn:nrmbnd}, and for the last inequality, we used $\Vert \theta_t - \theta^* \Vert \leq \sqrt{B}, \forall t \in [\tau-1]$ from Lemma~\ref{lemma:base}. Thus, $e_t \leq 8B, \forall t \in [\tau-1].$ Plugging this bound in Eq.~\eqref{eqn:main_recursion}, we obtain
\begin{equation}
d_{t+1} \leq (1-\alpha \omega (1-\gamma))d_t + O(\alpha B), \forall t \in [\tau-1],
\label{eqn:initbnd}
\end{equation}
where we used the choice of the step-size in Eq.~\eqref{eqn:step-size}. To proceed, we note that for any $\theta \in \mathbb{R}^K$, $\Vert \hat{V}_{\theta} - \hat{V}_{\theta^*} \Vert^2_D =(\theta-\theta^*)^{\top} \Sigma (\theta-\theta^*) \leq \Vert \theta -\theta^* \Vert^2$, since $\Sigma$ is positive definite with largest eigenvalue less than 1. Defining $s_t \triangleq \mathbb{E}[ \Vert \hat{V}_{\theta_t} - \hat{V}_{\theta^*} \Vert^2_D],$ recalling that $A = 0.5 \omega (1-\gamma)$, and using Eq.~\eqref{eqn:initbnd} in tandem with Theorem~\ref{thm:mainbnd}, we then have 
\begin{equation}
d_{t+1} \leq  
\begin{cases}
    (1-\alpha A)d_t -\alpha A s_t + c\alpha B, & t \in [\tau-1]\\
    (1-\alpha A)d_t -\alpha A s_t + c\alpha^2 \tau B, & t\geq \tau,
\end{cases}
\label{eqn:stage_bnd}
\end{equation}
where $c$ is some universal constant. Let us define a sequence of weights as $\bar{w}_t \triangleq (1-\alpha A)^{-(t+1)}, \forall t \geq 0$, and set $W_T=\sum_{t=0}^{T} \bar{w}_t.$ Using~\eqref{eqn:stage_bnd}, we then have
\begin{equation}
\begin{aligned}
A \left( \sum_{t=0}^T \frac{\bar{w}_t s_t}{W_T} \right) &\leq \underbrace{\frac{1}{\alpha W_T} \sum_{t=0}^{T} \left(\bar{w}_t (1-\alpha A) d_t - \bar{w}_t d_{t+1} \right)}_{(*)} \\
& \hspace{3mm} \underbrace{\frac{c B}{W_T} \sum_{t=0}^{\tau-1} \bar{w}_t}_{(**)} + \underbrace{\frac{c \alpha \tau B}{W_T} \sum_{t=\tau}^{T} \bar{w}_t}_{(***)}.  
\end{aligned}
\nonumber
\end{equation}
We now bound each term above. For $(*)$, we use $w_t (1-\alpha A) = w_{t-1}$ to obtain a telescoping sum, yielding
$$ (*) \leq \frac{(1-\alpha A) \bar{w}_0 d_0}{\alpha W_T} \leq \frac{d_0}{\alpha} \left(1-\alpha A\right)^{T+1},$$
where in the last step, we used $W_T \geq \bar{w}_T \geq (1-\alpha A)^{-(T+1)}.$ Next, it is easy to see that $(***) \leq c \alpha \tau B.$ Finally, we have
$$ (**) \leq \frac{c B}{\alpha A W_T} (1-\alpha A)^{-\tau} \leq \frac{2 c B}{\alpha A} (1-\alpha A)^{T+1}.$$
Here, we used that for $\alpha$ satisfying Eq.~\eqref{eqn:step-size}, it holds that $\alpha A \tau \leq 1/2.$ Thus, from Bernoulli's inequality, we have $(1-\alpha A)^{\tau} \geq (1- \alpha A \tau) \geq 1/2.$ Combining the bounds above, using $d_0 \leq B$, $A \leq 1$, and setting $w_t = \bar{w}_t/W_T,$ we have
\begin{equation}
\left(\sum_{t=0}^T w_t s_t \right) \leq \frac{3c B}{\alpha A^2} \exp\left(-\alpha A (T+1) \right) + \frac{c \alpha \tau B}{A}.
\label{eqn:finalbndavg}
\end{equation}
The rest of the proof involves tuning $\alpha$ carefully as in the proof of Lemma 25 in~\cite{stich2020comm}. We provide details for completeness. Let us define 
$$ 
\lambda \triangleq \max\{\exp(1), A (T+1)^2/\tau\}. 
$$
Now we consider two cases. \textbf{Case 1:} If 
$$\frac{\ln(\lambda)}{A (T+1)} \leq \frac{\omega (1-\gamma)}{C \tau}, \hspace{1mm} \textrm{set} \hspace{1mm} \alpha = \frac{\ln(\lambda)}{A (T+1)}.$$ 
Here, $C$ is as in Eq.~\eqref{eqn:step-size}. 
\textbf{Case 2:} If
$$  \frac{\omega (1-\gamma)}{C \tau} < \frac{\ln(\lambda)}{A (T+1)},  \hspace{1mm} \textrm{set} \hspace{1mm} \alpha = \frac{\omega (1-\gamma)}{C \tau}. $$

Observe that by choosing $\alpha$ in the manner above, one meets the requirement on the step-size in Eq.~\eqref{eqn:step-size} of Theorem~\ref{thm:ind_main} to ensure boundedness of the iterates. Now let us study how the choice of $\alpha$ above affects the bound in Eq.~\eqref{eqn:finalbndavg}. Consider Case 1 first. Direct substitution of $\alpha$ into the bound, and simplification using $\ln(\lambda) \geq 1$ yields:
$$ \left(\sum_{t=0}^T w_t s_t \right) \leq O\left( \frac{\tau B}{ \omega^2 (1-\gamma)^2 (T+1)}\right) + {O}\left( \frac{\tau B \ln(\lambda) }{ \omega^2 (1-\gamma)^2 (T+1)}\right) = \tilde{O}\left( \frac{\tau B}{ \omega^2 (1-\gamma)^2 (T+1)}\right),$$
where we used the definition of $A$. Proceeding similarly, for Case 2, we have:
$$ \left(\sum_{t=0}^T w_t s_t \right) \leq C_1 \exp{ \left(-\frac{\omega^2 (1-\gamma)^2 (T+1)}{2C\tau}\right)} + \tilde{O}\left(\frac{ \tau B}{\omega^2 (1-\gamma)^2 (T+1)} \right),$$
where 
$$ C_1 = {O}\left( \frac{B\tau}{\omega^3 (1-\gamma)^3}\right).$$
To arrive at the above bound, we used the fact that $\alpha < \frac{\ln(\lambda)}{A (T+1)}$ to bound $T_2$. Combining the bounds from Cases 1 and 2, and applying Jensen's inequality, we arrive at Eq.~\eqref{eqn:avgiteratebnd}. 
\bibliographystyle{unsrt}
\bibliography{refs}

\begin{thebibliography}{10}

\bibitem{sutton1988learning}
Richard~S Sutton.
\newblock {Learning to predict by the methods of temporal differences}.
\newblock {\em Machine learning}, 3(1):9--44, 1988.

\bibitem{tsitsiklisroy}
John~N Tsitsiklis and Benjamin Van~Roy.
\newblock {An analysis of temporal-difference learning with function
  approximation}.
\newblock In {\em IEEE Transactions on Automatic Control}, 1997.

\bibitem{borkar}
Vivek~S Borkar.
\newblock {\em Stochastic approximation: a dynamical systems viewpoint},
  volume~48.
\newblock Springer, 2009.

\bibitem{borkarode}
Vivek~S Borkar and Sean~P Meyn.
\newblock The {ODE} method for convergence of stochastic approximation and
  reinforcement learning.
\newblock {\em SIAM Journal on Control and Optimization}, 38(2):447--469, 2000.

\bibitem{korda}
Nathaniel Korda and Prashanth La.
\newblock On {TD}(0) with function approximation: Concentration bounds and a
  centered variant with exponential convergence.
\newblock In {\em Int. conf. on machine learning}, pages 626--634. PMLR, 2015.

\bibitem{narayanan}
C~Narayanan and Csaba Szepesv{\'a}ri.
\newblock Finite time bounds for temporal difference learning with function
  approximation: Problems with some “state-of-the-art” results.
\newblock Technical report, Technical report, 2017.

\bibitem{lakshmi}
Chandrashekar Lakshminarayanan and Csaba Szepesv{\'a}ri.
\newblock {Linear stochastic approximation: Constant step-size and iterate
  averaging}.
\newblock {\em arXiv:1709.04073}, 2017.

\bibitem{dalal}
Gal Dalal, Bal{\'a}zs Sz{\"o}r{\'e}nyi, Gugan Thoppe, and Shie Mannor.
\newblock {Finite sample analyses for {TD} (0) with function approximation}.
\newblock In {\em Proceedings of the AAAI Conference on Artificial
  Intelligence}, volume~32, 2018.

\bibitem{bhandari_finite}
Jalaj Bhandari, Daniel Russo, and Raghav Singal.
\newblock {A finite time analysis of temporal difference learning with linear
  function approximation}.
\newblock In {\em Conference on learning theory}, pages 1691--1692. PMLR, 2018.

\bibitem{srikant2019finite}
Rayadurgam Srikant and Lei Ying.
\newblock Finite-time error bounds for linear stochastic approximation and {TD}
  learning.
\newblock In {\em Conference on Learning Theory}, pages 2803--2830. PMLR, 2019.

\bibitem{arjevani}
Yossi Arjevani, Ohad Shamir, and Nathan Srebro.
\newblock A tight convergence analysis for stochastic gradient descent with
  delayed updates.
\newblock In {\em Algorithmic Learning Theory}, pages 111--132. PMLR, 2020.

\bibitem{stich2020comm}
Sebastian~U Stich.
\newblock On communication compression for distributed optimization on
  heterogeneous data.
\newblock {\em arXiv:2009.02388}, 2020.

\bibitem{delayedSA}
Arman Adibi, Nicolo~Dal Fabbro, Luca Schenato, Sanjeev Kulkarni, H~Vincent
  Poor, George~J Pappas, Hamed Hassani, and Aritra Mitra.
\newblock Stochastic approximation with delayed updates: Finite-time rates
  under {M}arkovian sampling.
\newblock {\em arXiv:2402.11800}, 2024.

\bibitem{puterman}
Martin~L Puterman.
\newblock Markov decision processes.
\newblock {\em Handbooks in operations research and management science},
  2:331--434, 1990.

\bibitem{levin2017markov}
David~A Levin and Yuval Peres.
\newblock {\em {Markov chains and mixing times}}, volume 107.
\newblock American Math. Soc., 2017.

\bibitem{chenQ}
Zaiwei Chen, Sheng Zhang, Thinh~T Doan, Siva~Theja Maguluri, and John-Paul
  Clarke.
\newblock Performance of {Q}-learning with linear function approximation:
  Stability and finite-time analysis.
\newblock {\em arXiv:1905.11425}, page~4, 2019.

\bibitem{mitraTDEF}
Aritra Mitra, George~J Pappas, and Hamed Hassani.
\newblock Temporal difference learning with compressed updates: Error-feedback
  meets reinforcement learning.
\newblock {\em arXiv:2301.00944}, 2023.

\bibitem{borkar2021ode}
Vivek Borkar, Shuhang Chen, Adithya Devraj, Ioannis Kontoyiannis, and Sean
  Meyn.
\newblock {The ODE method for asymptotic statistics in stochastic approximation
  and reinforcement learning}.
\newblock {\em arXiv:2110.14427}, 2021.

\bibitem{huo2023bias}
Dongyan Huo, Yudong Chen, and Qiaomin Xie.
\newblock {Bias and extrapolation in Markovian linear stochastic approximation
  with constant stepsizes}.
\newblock In {\em Abstract Proc. of the 2023 ACM SIGMETRICS Int. Conf. on
  Measurement and Modeling of Computer Systems}, pages 81--82.

\bibitem{lauand2023}
Caio~Kalil Lauand and Sean Meyn.
\newblock {The curse of memory in stochastic approximation}.
\newblock In {\em Proc. of 62nd IEEE CDC}, pages 7803--7809, 2023.

\bibitem{gosavi}
Abhijit Gosavi.
\newblock Boundedness of iterates in {Q}-learning.
\newblock {\em Systems \& control letters}, 55(4):347--349, 2006.

\bibitem{beck}
Carolyn~L Beck and Rayadurgam Srikant.
\newblock Error bounds for constant step-size {Q}-learning.
\newblock {\em Systems \& control letters}, 61(12):1203--1208, 2012.

\bibitem{qu}
Guannan Qu and Adam Wierman.
\newblock Finite-time analysis of asynchronous stochastic approximation and
  {Q}-learning.
\newblock In {\em {COLT}}, pages 3185--3205. PMLR, 2020.

\bibitem{doanSGD}
Thinh~T Doan.
\newblock Finite-time analysis of markov gradient descent.
\newblock {\em IEEE Transactions on Automatic Control}, 68(4):2140--2153, 2022.

\bibitem{tianNN}
Haoxing Tian, Ioannis~Ch Paschalidis, and Alex Olshevsky.
\newblock On the performance of temporal difference learning with neural
  networks.
\newblock {\em arXiv:2312.05397}, 2023.

\bibitem{liuTD}
Rui Liu and Alex Olshevsky.
\newblock Temporal difference learning as gradient splitting.
\newblock In {\em International Conference on Machine Learning}, pages
  6905--6913. PMLR, 2021.

\bibitem{mixing1}
Geoffrey Wolfer and Aryeh Kontorovich.
\newblock Estimating the mixing time of ergodic markov chains.
\newblock In {\em Conference on Learning Theory}, pages 3120--3159. PMLR, 2019.

\bibitem{mixing2}
Daniel Hsu, Aryeh Kontorovich, David~A Levin, Yuval Peres, Csaba
  Szepesv{\'a}ri, and Geoffrey Wolfer.
\newblock Mixing time estimation in reversible markov chains from a single
  sample path.
\newblock 2019.

\bibitem{wang2023federated}
Han Wang, Aritra Mitra, Hamed Hassani, George~J Pappas, and James Anderson.
\newblock Federated temporal difference learning with linear function
  approximation under environmental heterogeneity.
\newblock {\em arXiv preprint arXiv:2302.02212}, 2023.

\bibitem{khodadadian}
Sajad Khodadadian, Pranay Sharma, Gauri Joshi, and Siva~Theja Maguluri.
\newblock {Federated Reinforcement Learning: Linear Speedup Under Markovian
  Sampling}.
\newblock In {\em International Conference on Machine Learning}, pages
  10997--11057. PMLR, 2022.

\end{thebibliography}
\end{document}